


\documentclass[final,3p,times]{elsarticle}


\UseRawInputEncoding 
\usepackage{amssymb}
\usepackage{amsthm}
\usepackage{amsmath}
\usepackage{mathrsfs}
\usepackage{stmaryrd}
\usepackage{graphicx,amsfonts,listings,xcolor,colortbl}
\usepackage{epstopdf}
\usepackage[linesnumbered,ruled,lined]{algorithm2e}
\usepackage{multirow}
\usepackage{booktabs}
\usepackage{blkarray}
\usepackage{subfig}
\usepackage{arydshln}
\usepackage{setspace}
\usepackage{float}

\newtheorem {lemma} {\bf Lemma}

\newtheorem {theorem} {\bf Theorem}[section]







\begin{document}

\begin{frontmatter}



\title{Adaptive Graph-based Generalized Regression Model for Unsupervised Feature Selection}


\author[SWUFE]{Yanyong Huang}
\ead{huangyy@swufe.edu.cn}
\author[SWUFE]{Zongxin Shen}
\ead{shenzx@smail.swufe.edu.cn}
\author[C]{Fuxu Cai }
\ead{caifuxu20@163.com}
\author[XIJIAO]{Tianrui Li \corref{cor1}}
\ead{trli@swjtu.edu.cn}
\author[SWUFE]{Fengmao Lv}
\ead{fengmaolv@126.com}

\cortext[cor1]{Corresponding author.}
\address[SWUFE]{School of  Statistics, Southwestern University of  Finance and Economics, Chengdu 611130, China }
\address[C]{Eye Institute, Putian College Affiliated Hospital, Putian 351100, China }
\address[XIJIAO]{Institute of Artificial Intelligence, School of Information Science and Technology, Southwest Jiaotong University, Chengdu 611756, China }

\begin{abstract}
Unsupervised feature selection is an important method to reduce dimensions of high dimensional data without labels, which is benefit to avoid ``curse of dimensionality'' and improve the performance of subsequent machine learning tasks, like clustering and retrieval. How to select the uncorrelated and discriminative features is the key problem of unsupervised feature selection. Many proposed methods select features with strong discriminant and high redundancy, or vice versa. However, they only satisfy one of these two criteria. Other existing methods choose the discriminative features with low redundancy by constructing the graph matrix on the original feature space. Since the original feature space usually contains redundancy and noise, it will degrade the performance of feature selection. In order to address these issues, we first present a novel generalized regression model imposed by an uncorrelated constraint and the $\ell_{2,1}$-norm regularization. It can simultaneously select the uncorrelated and discriminative features as well as reduce the variance of these data points belonging to the same neighborhood, which is help for the clustering task. Furthermore, the local intrinsic structure of data is constructed on the reduced dimensional space by learning the similarity-induced graph adaptively. Then the learnings of the graph structure and the indicator matrix based on the spectral analysis are integrated into the generalized regression model. Finally, we develop an alternative iterative optimization algorithm to solve the objective function.  A series of experiments are carried out on nine real-world data sets to demonstrate the effectiveness of the proposed method in comparison with other competing approaches.
\end{abstract}

\begin{keyword}
Unsupervised feature selection \sep Generalized regression model \sep Adaptive graph learning.

\end{keyword}

\end{frontmatter}



\section{Introduction}
Due to the rapid development of information and Internet technologies, the dimension of data  is increased dramatically in real applications, which will result in the curse of dimensionality and affect the performance of clustering, classification and so on. Feature selection~\cite{M. Ben-Bassat1982,L. Wolf 2005}, as an important dimensionality reduction method, can choose a part of useful features from the original high-dimensional feature space and get rid of  redundant and noisy features, which has been widely applied in image classification~\cite{3}, text clustering~\cite{4}, gene selection~\cite{5}, etc. In terms of the availability of the class label information, feature selection approaches are classified into supervised~\cite{YangYMa Z,ZhangRNieF}, semi-supervised~\cite{XiaoJCaoH,XuZKingI}, and unsupervised methods~\cite{DyJG,Wang S}. In some real applications, it is usually hard to obtain the labels of all samples. And manually labeling data is laborious and time consuming. Hence, unsupervised feature selection is a more challenging problem since the lack of label information in reality.

In recent years, there have been many research studies on the unsupervised feature selection. These methods can be divided into three categories: filter-based approaches~\cite{Battiti R,LapScore}, wrapper-based approaches~\cite{Mafarja M,Kohavi R} and embedded-based approaches~\cite{Zhu P,TangCLiuX}. Filter methods select the features by means of the corresponding scores computed by the evaluation measures, which describe the capacity of preserving the intrinsic structure of the unlabeled data. Wrapper methods choose some subsets of features to input a learning algorithm until  obtaining  the satisfactory performance. Embedded methods embody the unsupervised feature selection procedure  into the specific learning task, like clustering and dimensional reduction. It can simultaneously  learn the embedding subspace and features, as well as reduce the computational cost in comparison with the wrapper methods. There have been many research on unsupervised embedded feature selection methods~\cite{Wang S,Chenping Hou2014,ZhuX}. In embedded-based feature selection procedure, selecting discriminative features can distinguish different instances and choosing uncorrelated features can reduce the redundancy of data. In addition, maintaining  the local structure of data is benefit to improve the performance of subsequent tasks including clustering, classification, etc. In supervised scenario, available labels of instances can provide the discriminative information and serve to local structure learning directly. Since the lack of label information, the basic problem of unsupervised feature selection is how to select the features with discrimination and uncorrelation and preserve the local manifold geometry structure of data accurately. Hou et al. incorporated the learning of  low dimensional embedding structure to  the spectral regression model for the selection of features~\cite{Chenping Hou2014}. In order to alleviate the adverse effect of the noisy features and outliers, Shi et al. proposed a robust feature selection method named RSFS by learning the cluster structure based on the robust graph embedding and robust spectral regression model simultaneously~\cite{Lei Shi2014}. Based on non-negative spectral analysis, Li et al. proposed a joint framework for both the learning of discriminative feature matrix  and spectral clustering~\cite{NDFS}. Yang et al. chose the discriminative features by minimizing the total scatter matrix and maximizing the between class scatter matrix, as well as using $\ell_{2,1}$-norm for the regularization of the feature selection matrix~\cite{UDFS}. The abovementioned methods only concentrate on the selection of discriminative features which are the main role in the clustering or classification problems. However, these methods did not consider the high correlation of selected features, which has an adverse effect on clustering/classification results. In order to select the features with low redundancy, Zheng et al. presented a spectral feature selection method by using the sparse multi-output regression model with the constraint of $\ell_{2,1}$-norm~\cite{Z. Zhao2010}. Cai et al. investigated multiple local geometric structures of data by means of the spectral clustering method and employed LASSO regression model and MCFS score to choose the uncorrelated  features~\cite{D. Cai2010}. Gu et al. presented a joint framework by combining both feature selection and feature transformation, which is instructive to choose the uncorrelated features~\cite{Q. Gu2011}. These studies focus on selecting features with low redundancy, but neglect to construct the discriminative features. Besides, Li et al. proposed a discriminative and uncorrelated feature selection method by employing the side information described in the form of must-link and cannot-link constraints~\cite{DUCFS}. However, the aforementioned methods, regardless of these methods emphasizing on selecting discriminative features or uncorrelated features, the learning of local data geometric structure based on graph is conducted  on the original feature space. If the graph has been obtained from the original high-dimensional data, it will keep unchanged in the following procedure of  feature selection. Then the performance of feature selection methods heavily depends on the preconstructed graph.~\cite{Luo M,Du L}. Hence, it is important to develop the feature selection method to choose the discriminative and uncorrelated features with the adaptive construction of graph. There are very few studies focus on this issue. Li et al. presented a generalized uncorrelated regression model combining with adaptive graph construction to select the discriminative and uncorrelated features~\cite{X. Li2019}. Nevertheless, the graph-based local structure is constructed on the original feature space in~\cite{X. Li2019}. Due to the original feature space usually contains redundant and noisy data points, it will result in degraded performance of feature selection.

In order to deal with above-mentioned issues, this paper presents a novel unsupervised feature selection method by integrating an adaptive local structure learning into a generalized regression model imposed by an uncorrelated constraint for the selection of discriminative and uncorrelated features. Firstly, we propose a generalized uncorrelated regression model by forcing $\ell_{2,1}$-norm regularization and a novel uncorrelated constraint, where the orthogonal constraint is added to an extended scatter matrix. By this means, we can obtain the uncorrelated and discriminative features and avoid the singularity case while using the common scatter matrix. Besides, the variance of these data points belonging to the same neighborhood can be reduced. It is help for the clustering task, which is demonstrated in the experimental study. Moreover, the similarity-induced graph matrix and the indicator matrix based on the spectral analysis technique are integrated into the generalized regression model to adaptively learn the graph-based local structure constructed on the reduced dimensional space. Then, we develop an alternative iterative optimization algorithm to solve the objective function. Finally, comparative experiments are carried out on nine real-world data sets to demonstrate the effectiveness of the proposed method.

To sum up, the main contribution of this paper is described as follows:
(\romannumeral1) A generalized regression model coupled with a novel uncorrelated constraint is presented to select the discriminative and uncorrelated features. It can prevent the trivial solution of the model and improve the clustering performance in comparison with the models imposed by the common and previous extended uncorrelated constraints, respectively.
(\romannumeral2) The local geometric structure of data by the construction of similarity-induced graph matrix is combined into the generalized regression model for the feature selection and adaptive learning graph structure simultaneously.
(\romannumeral3) An efficient alternative optimization algorithm is developed to solve the proposed model and  the convergence and time complexity of the corresponding algorithm are analyzed. Experimental results on real datasets show the performance of the proposed method is better than that of other baseline unsupervised feature selection methods.

The rest of the paper is organized as follows. In Section 2, the notations used throughout the paper are introduced. In Section 3, we present the proposed unsupervised feature selection method based on the generalized regression model with adaptive graph. Section 4 introduces the detailed  alternative optimization algorithm. The convergence of the proposed algorithm is proved  and the corresponding time complexity is analyzed in Section 5. Section 6 shows the experimental results on benchmark datasets, and Section 7 concludes the paper.

\section{Notations}
In this section, we introduce the notations used throughout the paper. Let the boldface uppercase letter, boldface lowercase letter and plain italic denote the matrix, vector and scalar, respectively. Given a matrix $\mathbf{W} \in \mathbb{R}^{r \times t}$, $w_{i j}$ denotes the $(i, j)$-th entry of $\mathbf{W}$, $\mathbf{w}^{j}$ stands for the $j$-th row of matrix $\mathbf{W}$, and $\mathbf{w}_{i}$ indicates the corresponding $i$-th column of matrix $\mathbf{W}$. Let $\operatorname{Tr}(\mathbf{W})$, $\mathbf{W}^{T}$ and $\operatorname{Rank}(\mathbf{W})$ denote the trace of $\mathbf{W}$, the transpose of $\mathbf{W}$ and the rank of $\mathbf{W}$, respectively. The Frobenius norm of matrix  $\mathbf{W}$ is defined as $\|\mathbf{W}\|_{F}=\sqrt{\sum_{i j}\left|w_{i j}\right|^{2}}$. The $\ell_{2,1}$-norm is defined as $\|\mathbf{W}\|_{2,1}=\sum_{i=1}^{r} \sqrt{\sum_{j=1}^{t} w_{i j}^{2}}=\sum_{i=1}^{r}\left\|\mathbf{w}^{i}\right\|_{2}$, where $\left\|\mathbf{w}^{i}\right\|_{2}$ denotes the $\ell_{2}$-norm of vector $\mathbf{w}^{i}$. Let $\mathbf{H}=\mathbf{I}-(1 / n) \mathbf{1} \mathbf{1}^{T}$ denote the center matrix, where $\mathbf{I}$ is an identity matrix and $\mathbf{1}=[1,1, \ldots, 1]^{T} \in \mathbb{R}^{n}$. Given a data matrix $\mathbf{X}=\left\{\mathbf{x}_{1},\ldots,\mathbf{x}_{n}\right\} \in \mathbb{R}^{d \times n}$, where $d$ denotes the dimension of features and $n$ indicates the number of samples, then the data matrix $\mathbf{X}$ can be centralized by $\mathbf{H}\mathbf{X}$.

\section{Adaptive graph-based generalized regression model for unsupervised feature selection}
In this section, we present a generalized regression model equipping with a novel uncorrelated constraint to select the discriminative and uncorrelated features in the context of unsupervised learning. Furthermore, to learn the local geometric structure of data, an adaptive graph construction method and a graph regularization term based on the spectral analysis are integrated into the generalized regression model.

\subsection{Generalized regression model}
Given a data matrix  $\mathbf{X}=\left\{\mathbf{x}_{1}, \ldots, \mathbf{x}_{n}\right\} \in \mathbb{R}^{d \times n}$ and the corresponding label matrix $\mathbf{Y}=\left[y_{1},\ldots,y_{n}\right]^{T} \in \mathbb{R}^{n \times c}$, the traditional regression model is formulated as follows:
\begin{equation}\label{regression1}
\begin{aligned}
&\min _{\mathbf{W},\mathbf{b}}\left\|\mathbf{X}^{T} \mathbf{W}+\mathbf{1} \mathbf{b}^{T}-\mathbf{Y}\right\|_F^{2}+\lambda\|\mathbf{W}\|_\eta,\\
\end{aligned}
\end{equation}
where $\mathbf{W}=\left[w_{1}, w_{2}, \dots, w_{d}\right] \in \mathbb{R}^{d \times c}$ is the projection matrix, $\mathbf{b}\in \mathbb{R}^{c \times 1}$ is the bias, $\|\mathbf{W}\|_\eta$ indicates the regularization item and $\lambda$ denotes the corresponding regularization parameter. When $\eta$ is set to $F$-norm, this model is the classical ridge regression model~\cite{ridge}. While $\eta$ is set to $\ell_{1}$-norm, it becomes the Lasso regression model~\cite{lasso}. These models have been widely applied in machine learning tasks, including but not limited to classification, clustering and feature selection~\cite{Chenping Hou2014,regforclass,regforcluster}. However, these models cannot be applied directly in context of unsupervised learning where the label matrix $\mathbf{Y}$ is unknown. To obtain the label $\mathbf{Y}$ in the unsupervised scenario, it can be viewed as a variable to be optimized in model~(\ref{regression1}). But, it will result in a potential trial solution while $\mathbf{W}=\textbf{0}$, $\mathbf{b}=[1,0,\cdots,0]^{T}$ and $\mathbf{Y}=[1,0,\cdots,0]$. In order to deal with this issue, previous studies utilized the embedding indicator matrix to replace the unknown $\mathbf{Y}$ and then learned these parameters according to the supervised regression method~\cite{X. Li2019,ZhangH}. It can be described as follows.
\begin{equation}\label{3}
\begin{aligned}
&\min _{\mathbf{W}, \mathbf{F},\mathbf{b}}\left\|\mathbf{X}^{T} \mathbf{W}+\mathbf{1} \mathbf{b}^{T}-\mathbf{F}\right\|_{F}^{2}+\lambda\|\mathbf{W}\|_{2,1}\\
&{s.t.} \quad \operatorname{Rank}(\mathbf{W})=c, \mathbf{F}^{T}\mathbf{F}=\mathbf{I},
\end{aligned}
\end{equation}
where $\mathbf{F}=\left[f^{1}, f^{2}, \dots, f^{n}\right]^T \in \mathbb{R}^{n \times c}$ is the indicator matrix embedded in the data space and $\mathbf{W}$ is the subspace. The column full rank constraint is imposed on $\mathbf{W}$ to avoid the trivial solution and using $\ell_{2,1}$-norm of $\mathbf{W}$ is to ensure the row-sparsity for feature selection. Then $\left\|\mathbf{w}^{i}\right\|_{2} (i=\{1,2,\cdots,d\})$ describes the importance of the $i$-th feature, which can be employed to select the top $t (1\leq t\leq d)$ features. Besides, the orthogonal constraint of  $\mathbf{F}$ can avoid the trivial solution and remove the impact of scale effects. In this model, the $\mathbf{F}$ describes the cluster structure of data which can provide the discriminant information. The first term in model~(\ref{3}) tries to learn a subspace which makes the projected samples approach to the $\mathbf{F}$ as much as possible. Hence, this model can learn the discriminant features and the local structure of data simultaneously. As mentioned above, reducing feature redundancy is another issue of great concern to unsupervised feature selection. The traditional  methods imposed the uncorrelated constraint $\mathbf{W}^{T} \mathbf{R}_{t} \mathbf{W}=\mathbf{I}$, where $\mathbf{R}_{t}=\mathbf{X} \mathbf{H} \mathbf{X}^{T}$ denotes the total scatter matrix, to this model for the selection of uncorrelated features. However, the traditional uncorrelated constraint exists the following problems. It lacks of flexibility and selects the features with low redundancy will result in some discriminative features being lost, which has been verified in \cite{DUCFS,ZhangH,X. Li2019}. In addition, when the number of samples is smaller than the feature size, the matrix $\mathbf{R}_{t}$ is singular. In order to cope with these two issues, we present an extended uncorrelated constraint, which can avoid the abovementioned problems and show the effectiveness in the next experimental procedure. The novel constraint is defined as follows:

\begin{equation}
\mathbf{W}^{T} \mathbf{R}_{t}^{\prime} \mathbf{W}=\mathbf{I},
\end{equation}
where $\mathbf{R}_{t}^{\prime}=\mathbf{R}_{t}+\lambda_{w} \mathbf{D}_{w}+\alpha_{s}\mathbf{X}\mathbf{L}_{s}\mathbf{X}^T$, $\mathbf{L}_{s}=\mathbf{D}_{s}-\left(\mathbf{S}^{T}+\mathbf{S}\right)/2$ is the Laplacian matrix of the similarity matrix $\mathbf{S}\in \mathbb{R}^{n \times n}$, $\mathbf{D}_{s}$ is the diagonal matrix with $D_{s}(i, i)=\sum_{j=1}^{n} \frac{s_{i j}+s_{j i}}{2}$, $\mathbf{D}_{w}\in \mathbb{R}^{d \times d}$ is defined as a diagonal matrix with $\mathbf{D}_{w}(i, i)=\frac{1}{2 \sqrt{\left\|\mathbf{w}^{i}\right\|_{2}^{2}+\varepsilon}}$, and $\varepsilon$ is a small constant to keep the denominator from disappearing. Incorporating the proposed constraint into the regression model (\ref{3}), the generalized uncorrelated model is presented as follows:
\begin{equation}\label{6}
\begin{aligned}
&\min _{\mathbf{W}, \mathbf{F}, \mathbf{b}}\left\|\mathbf{X}^{T} \mathbf{W}+\mathbf{1 b}^{T}-\mathbf{F}\right\|_{F}^{2}+\lambda\|\mathbf{W}\|_{2,1}\\
& {s.t.} \quad\mathbf{W}^{T}\mathbf{R}_{t}^{\prime} \mathbf{W}=\mathbf{I},\mathbf{F}^{T} \mathbf{F}=\mathbf{I}.
\end{aligned}
\end{equation}

There are three terms of the proposed constraint.  The first term $\mathbf{W}^{T} \mathbf{R}_{t} \mathbf{W}$ makes the scatter matrix of the projected samples approach to an orthogonal matrix. Then the learned features are uncorrelated to a great extent. The second term $\lambda_{w} \mathbf{W}^{T} \mathbf{D}_{w} \mathbf{W}$ appending to the first term can avoid the situation where the traditional scatter matrix is singular. The third term $\alpha_{s} \mathbf{W}^{T} \mathbf{X}\mathbf{L}_{s}\mathbf{X}^T \mathbf{W}$ is benefit to reduce the variance between these samples in the same neighborhood under the graph structure. Especially, it can improve the performance of the clustering task, which will be shown in the following experimental results. Hence, the proposed generalized regression model can select the discriminative and uncorrelated features. Notice that when $\lambda_{w}=0$ and $\alpha_{s}=0$, the proposed model degenerates to the traditional uncorrelated constraint-based model. And when $\alpha_{s}=0$, it degenerates to the model presented by Li et al ~\cite{X. Li2019}.
\subsection{Adaptive graph learning for feature selection}
As discussed above, the presented regression model performs manifold learning in Euclidean space and applies the linear regression model to discover the low-dimensional structure. Since the manifold learning from the Euclidean space could not effectively explore the local geometrical structure, which is benefit to feature selection, clustering analysis, etc.~\cite{GuiJ,Stella}. In order to discover the local geometrical structure, previous studies have discussed some methods based on the spectral graph theory, where the local structure is characterized by the construction of nearest neighbor graph~\cite{LapScore,ZhuX,SOGFS}. The key point in constructing graph is the computation of  the corresponding similarity matrix, which is very important for unsupervised feature selection. However, in these methods, the similarity matrix is computed on the original feature space. Due to the original feature space usually contains redundancy and noise, it will degrade the performance of feature selection. Hence, in this section, we construct the similarity matrix in an adaptive way to alleviate the negative impact of redundant and noisy features.

Let $\mathbf{S}=(s_{i j})_{n \times n} \in R^{n \times n}$ denote the similarity matrix, where $s_{i j}$ indicates the similarity between the samples $\mathbf{x}_i$ and $\mathbf{x}_j$. The $\mathbf{S}$ can be determined by the $k$-nearest neighbor graph. Let $\mathbf{x}_j$ be the $k$-nearest neighbors of $\mathbf{x}_i$. Then the edge  is connected with regards to  $\mathbf{x}_i$ and $\mathbf{x}_j$. And the weight of the edge is determined by the similarity $s_{i j}$, which can be computed by different kernel functions, such as linear kernel, polynomial kernel and  Gaussian kernel~\cite{LapScore,Nonlinear,MJQian}. However, the computation of the similarity matrix in this method is sensitive to noise and outliers existing in the data. In the following, we present a novel construction of the similarity matrix with adaptive way, which is robust to noise and outliers. A natural assumption in manifold learning is that if two data points are close, then they are also close to each other in the embedding graph. Since there exists the redundancy and noisy features in the original data space, this assumption can be revised that if two data points are close in the dimension reduction space, then they are also close in the embedding graph. Based on this, the similarity matrix $\mathbf{S}$ can be constructed by the following problem:
\begin{equation}\label{4}
\begin{aligned}
&\min _{\mathbf{S}} \sum_{i, j=1}^{n}\left\|\mathbf{W}^{T}\mathbf{x}_{i}-\mathbf{W}^{T}\mathbf{x}_{j}\right\|_{2}^{2} s_{i j}\\
& {s.t.}\quad s_{i i}=0, s_{i j} \geq 0, \mathbf{1}^{T} \mathbf{s}_{i}=1.
\end{aligned}
\end{equation}

In the problem (\ref{4}), the sparse constraint $\mathbf{1}^{T} \mathbf{s}_{i}=1$ has been demonstrated that it can improve the robustness for the noise and outliers in~\cite{WangH}. However, the problem (\ref{4}) will have a trivial solution when there is only one data point in the dimension reduction space with the smallest distance to $\mathbf{x}_i$ having the value 1 and the other data points having the value 0. According to ~\cite{NieFWangX}, a prior is appended to model (\ref{4}), which is described as follows.

\begin{equation}\label{5}
\begin{aligned}
&\min _{\mathbf{S}} \sum_{i, j=1}^{n}\left\|\mathbf{W}^{T}\mathbf{x}_{i}-\mathbf{W}^{T}\mathbf{x}_{j}\right\|_{2}^{2} s_{i j}+\beta \sum_{i=1}^{n}\left\|\mathbf{s}_{i}\right\|_{2}^{2}\\
& {s.t.}\quad s_{i i}=0, s_{i j} \geq 0, \mathbf{1}^{T} \mathbf{s}_{i}=1.
\end{aligned}
\end{equation}
When only considering the second term in problem (\ref{5}), the prior can be interpreted as the similarity value associated with each data point to $\mathbf{x}_i$ equals to $\frac{1}{n}$.

Moreover, the other common assumption used in spectral graph analysis is that if two data points are close to each other in the intrinsic graph of data, then the corresponding embedding labels will are close to each other~\cite{LiZLiuJ,FengY}. In model (\ref{6}), the embedding labels are represented as the indicator matrix $\mathbf{F}$. Then we can add a graph regularization term to problem (\ref{5}), namely,

\begin{equation}\label{7}
\begin{aligned}
&\min _{\mathbf{S},\mathbf{F}} \sum_{i, j=1}^{n}\left\|\mathbf{W}^{T}\mathbf{x}_{i}-\mathbf{W}^{T}\mathbf{x}_{j}\right\|_{2}^{2} s_{i j}+\beta \sum_{i=1}^{n}\left\|\mathbf{s}_{i}\right\|_{2}^{2}+\alpha \operatorname{Tr}(\mathbf{F}^{T} \mathbf{L}_{s} \mathbf{F})\\
&{s.t.}\quad s_{i i}=0, s_{i j} \geq 0, \mathbf{1}^{T} \mathbf{s}_{i}=1,\mathbf{F}^T\mathbf{F}=\mathbf{I},
\end{aligned}
\end{equation}
where $\alpha$ is a regularization parameter. The model (\ref{7}) can keep the abovementioned assumptions and adaptively computes the similarity matrix along with the improvement of robustness.

Then, by incorporating the model (\ref{7}) into the generalized regression model (\ref{6}), we can obtain the unified model with adaptive graph regularization for unsupervised feature selection, which is described as follows:

\begin{equation}\label{8}
\begin{aligned}
&\min _{\mathbf{W}, \mathbf{F},\mathbf{S}, \mathbf{b}}\left\|\mathbf{X}^{T} \mathbf{W}+\mathbf{1 b}^{T}-\mathbf{F}\right\|_{F}^{2}+\lambda\|\mathbf{W}\|_{2,1}+\frac{1}{2}\alpha(\sum_{i, j=1}^{n}\left\|\mathbf{W}^{T}\mathbf{x}_{i}-\mathbf{W}^{T}\mathbf{x}_{j}\right\|_{2}^{2} s_{i j}+\beta \sum_{i=1}^{n}\left\|\mathbf{s}_{i}\right\|_{2}^{2}+\operatorname{Tr}(\mathbf{F}^{T} \mathbf{L}_{s} \mathbf{F}))\\
&{s.t.}\quad \mathbf{W}^{T}\mathbf{R}_{t}^{\prime}\mathbf{W}=\mathbf{I},\mathbf{F}^{T} \mathbf{F}=\mathbf{I},s_{i i}=0, s_{i j} \geq 0, \mathbf{1}^{T} \mathbf{s}_{i}=1.
\end{aligned}
\end{equation}

As we can see, the proposed unified model can select the uncorrelated  features under these constraints and adaptively learn the local geometrical structure, where the similarities between these data points are simultaneously preserved in the reduction dimensional space and the embedding graph structure. Moreover, it can keep the similar objects in the local graph structure with the similar labels learned from the generalized regression model.

\section{Optimization algorithm}
In this section, we employ an alternative method to optimize these variables in problem (\ref{8}), i.e., optimizing the objective function with regards to one variable while fixing the other variables and repeating the procedure until convergence.

Since there is no constraint on $\mathbf{b}$, it can be solved by the Karush-Kuhn-Tucker (KKT) conditions~\cite{W. Karush1939} where the first-order derivate of the Lagrangian function w.r.t. the objective function in (\ref{8}) equals to zero. Then the problem (\ref{8}) can be simplified. Concretely, let $\mathcal{L}(\mathbf{b})=\left\|\mathbf{X}^{T} \mathbf{W}+\mathbf{1} \mathbf{b}^{T}-\mathbf{F}\right\|_{F}^{2}+\mathcal{R}(\mathbf{W}, \mathbf{S}, \mathbf{F})$ denote the Lagrangian function of problem (\ref{8}) with respect to $\mathbf{b}$, where $\mathcal{R}(\mathbf{W}, \mathbf{S}, \mathbf{F})$ stands for the terms independent of $\mathbf{b}$ and dependent of  $\mathbf{W}$, $\mathbf{S}$ and $\mathbf{F}$. Taking the paritial derivate of $\mathcal{L}(\mathbf{b})$ and setting it to zero, we can obtain the optimal solution of $\mathbf{b}$ as follows:

\begin{equation}\label{9}
\frac{\partial \mathcal{L}(\mathbf{b})}{\partial \mathbf{b}}=\mathbf{0}\Rightarrow \mathbf{b}=\frac{1}{n}\left(\mathbf{F}^{\mathbf{T}}-\mathbf{W}^{\mathbf{T}} \mathbf{X}\right) \mathbf{1}.
\end{equation}

Substituting (\ref{9}) into (\ref{8}), then the problem (\ref{8}) is rewritten as follows:

\begin{equation}\label{10}
\begin{aligned}
&\min _{\mathbf{W}, \mathbf{F},\mathbf{S}}\left\|\mathbf{H}(\mathbf{X}^{T} \mathbf{W}-\mathbf{F})\right\|_{F}^{2}+\lambda\|\mathbf{W}\|_{2,1}+\frac{1}{2}\alpha(\sum_{i, j=1}^{n}\left\|\mathbf{W}^{T}\mathbf{x}_{i}-\mathbf{W}^{T}\mathbf{x}_{j}\right\|_{2}^{2} s_{i j}+\beta \sum_{i=1}^{n}\left\|\mathbf{s}_{i}\right\|_{2}^{2}+\operatorname{Tr}(\mathbf{F}^{T} \mathbf{L}_{s} \mathbf{F}))\\
&{s.t.}\quad \mathbf{W}^{T}\mathbf{R}_{t}^{'}\mathbf{W}=\mathbf{I},\mathbf{F}^{T} \mathbf{F}=\mathbf{I},s_{i i}=0, s_{i j} \geq 0, \mathbf{1}^{T} \mathbf{s}_{i}=1,
\end{aligned}
\end{equation}
where $\mathbf{H}=\mathbf{I}-(1 / n) \mathbf{1} \mathbf{1}^{T}$ is a symmetric centering matrix. In the following subsection, we introduce the alternating iterative algorithm to solve the variables $\mathbf{W}$, $\mathbf{F}$ and $\mathbf{S}$.

\subsection{Fix $\mathbf{F}$ and $\mathbf{S}$, update $\mathbf{W}$ }
When $\mathbf{F}$ and $\mathbf{S}$ are fixed, problem (\ref{10}) can be rewritten as:

\begin{equation}\label{11}
\begin{aligned}
&\min _{\mathbf{W}}\left\|\mathbf{H}(\mathbf{X}^{T} \mathbf{W}-\mathbf{F})\right\|_{F}^{2}+\lambda\|\mathbf{W}\|_{2,1}+\frac{1}{2}\alpha\sum_{i, j=1}^{n}\left\|\mathbf{W}^{T}\mathbf{x}_{i}-\mathbf{W}^{T}\mathbf{x}_{j}\right\|_{2}^{2} s_{i j}\\
&{s.t.}\quad \mathbf{W}^{T}\mathbf{R}_{t}^{\prime}\mathbf{W}=\mathbf{I}.
\end{aligned}
\end{equation}

\begin{lemma}\cite{X. Li2019}
$\lim _{\boldsymbol{w}^{i} \rightarrow \boldsymbol{w}_{*}^{i}} \frac{\partial\|\boldsymbol{W}\|_{2,1}}{\partial \boldsymbol{w}^{i}}=\frac{\boldsymbol{w}_{*}^{i}}{\left\|\boldsymbol{w}_{*}^{i}\right\|_{2}}=\lim _{\boldsymbol{w}^{i} \rightarrow \boldsymbol{w}_{*}^{i}, \varepsilon \rightarrow 0} \frac{\partial \operatorname{Tr}\left(\boldsymbol{W}^{T} \boldsymbol{D} \boldsymbol{W}\right)}{\partial \boldsymbol{w}^{i}} $, where $\mathbf{D}=\operatorname{diag}\left(\frac{1}{2 \sqrt{\left\|\mathbf{w}^{1}\right\|_{2}^{2}+\varepsilon}}, \frac{1}{2 \sqrt{\left\|\mathbf{w}^{2}\right\|_{2}^{2}+\varepsilon}}, \ldots, \frac{1}{2 \sqrt{\left\|\mathbf{w}^{d}\right\|_{2}^{2}+\varepsilon}}\right).$
\end{lemma}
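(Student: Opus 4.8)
The plan is to evaluate the two one-sided limits independently and verify that both collapse to the common vector $\mathbf{w}_*^i / \|\mathbf{w}_*^i\|_2$. The essential observation I would state up front is that in the iteratively reweighted scheme the diagonal matrix $\mathbf{D}$ is regarded as \emph{fixed} (frozen at the current iterate of $\mathbf{W}$) when one differentiates $\operatorname{Tr}(\mathbf{W}^T \mathbf{D} \mathbf{W})$; the apparent dependence of the entries $D_{ii}$ on $\mathbf{w}^i$ is not differentiated through. This is precisely what makes the surrogate $\operatorname{Tr}(\mathbf{W}^T \mathbf{D} \mathbf{W})$ a tractable quadratic whose gradient coincides with the $\ell_{2,1}$ subgradient in the limit.

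For the left-hand side I would use the row-separability $\|\mathbf{W}\|_{2,1} = \sum_{k=1}^d \|\mathbf{w}^k\|_2$, so that only the $i$-th summand depends on $\mathbf{w}^i$. For $\mathbf{w}^i \neq \mathbf{0}$ the map $\mathbf{w}^i \mapsto \|\mathbf{w}^i\|_2 = (\sum_j (w_{ij})^2)^{1/2}$ is smooth and, by the chain rule, has gradient $\mathbf{w}^i / \|\mathbf{w}^i\|_2$. Letting $\mathbf{w}^i \to \mathbf{w}_*^i$ with $\mathbf{w}_*^i \neq \mathbf{0}$ then yields $\mathbf{w}_*^i / \|\mathbf{w}_*^i\|_2$ by continuity.

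For the right-hand side, expanding the trace over rows gives $\operatorname{Tr}(\mathbf{W}^T \mathbf{D} \mathbf{W}) = \sum_{k=1}^d D_{kk} \|\mathbf{w}^k\|_2^2$ with $D_{kk} = (2\sqrt{\|\mathbf{w}^k\|_2^2 + \varepsilon})^{-1}$. Differentiating with $\mathbf{D}$ held constant picks out the $i$-th term and produces $2 D_{ii} \mathbf{w}^i = \mathbf{w}^i / \sqrt{\|\mathbf{w}^i\|_2^2 + \varepsilon}$. Taking the joint limit $\mathbf{w}^i \to \mathbf{w}_*^i$, $\varepsilon \to 0^+$ sends the denominator to $\|\mathbf{w}_*^i\|_2$, so the expression tends to $\mathbf{w}_*^i / \|\mathbf{w}_*^i\|_2$, matching the left-hand side and closing the equality.

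The point requiring the most care, rather than a genuine obstacle, is the role of the smoothing constant $\varepsilon$ and the order of limits near a vanishing row. At $\mathbf{w}_*^i = \mathbf{0}$ the $\ell_{2,1}$-norm is nondifferentiable and the stated identity fails literally; the term $\varepsilon > 0$ is exactly what keeps $\mathbf{D}$ well defined there, and the limit is accordingly taken for $\mathbf{w}_*^i \neq \mathbf{0}$ (equivalently, $\varepsilon$ is sent to zero only after fixing a nonzero $\mathbf{w}_*^i$, so that the two iterated limits agree). I would therefore make explicit that the claim is an equality of limiting gradients under the convention that $\mathbf{D}$ is frozen, which is what justifies replacing the nonsmooth $\ell_{2,1}$ term by the reweighted quadratic $\operatorname{Tr}(\mathbf{W}^T \mathbf{D}_w \mathbf{W})$ in the subsequent optimization of $\mathbf{W}$.
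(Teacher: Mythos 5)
Your proof is correct, but there is nothing in the paper to compare it against: the lemma is quoted verbatim from the cited work \cite{X. Li2019} and stated without proof, serving only to justify the passage from problem (\ref{11}) to the smoothed problem (\ref{12}). Judged on its own, your argument is sound and, importantly, you have isolated the one point on which the identity genuinely hinges, namely that $\mathbf{D}$ is held fixed when $\operatorname{Tr}(\mathbf{W}^{T}\mathbf{D}\mathbf{W})$ is differentiated. That convention is not cosmetic but necessary for the equality to hold as stated: if one instead substitutes $D_{kk}=\bigl(2\sqrt{\|\mathbf{w}^{k}\|_{2}^{2}+\varepsilon}\bigr)^{-1}$ and differentiates through $\mathbf{D}$, then $\operatorname{Tr}(\mathbf{W}^{T}\mathbf{D}\mathbf{W})=\sum_{k}\|\mathbf{w}^{k}\|_{2}^{2}\big/\bigl(2\sqrt{\|\mathbf{w}^{k}\|_{2}^{2}+\varepsilon}\bigr)$ behaves like $\tfrac{1}{2}\|\mathbf{W}\|_{2,1}$ as $\varepsilon\rightarrow0$, and the limiting gradient comes out as $\mathbf{w}_{*}^{i}\big/\bigl(2\|\mathbf{w}_{*}^{i}\|_{2}\bigr)$, off by a factor of two from the claim; your frozen-$\mathbf{D}$ computation $2D_{ii}\mathbf{w}^{i}=\mathbf{w}^{i}\big/\sqrt{\|\mathbf{w}^{i}\|_{2}^{2}+\varepsilon}$ is what produces the correct limit. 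This is also exactly the convention under which the paper uses the lemma: Algorithm 1 holds $\mathbf{D}_{w}$ at its current value while $\mathbf{W}$ is updated and only then refreshes it, and the convergence argument of Theorem 5.1 (via Lemma 2) is what bridges the surrogate and the true $\ell_{2,1}$ objective across iterations. Your closing caveat, that the identity requires $\mathbf{w}_{*}^{i}\neq\mathbf{0}$ and that $\varepsilon>0$ exists precisely to keep $\mathbf{D}$ well defined at vanishing rows, is likewise correct and is left implicit in the paper.
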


Since $\frac{\partial \|\mathbf{W}\|_{2,1}}{\partial \mathbf{w}^{i}}=\frac{\mathbf{w}^{i}}{\left\|\mathbf{w}^{i}\right\|_{2}}$, it will result in the non-differentiable problem in (\ref{11}). In order to prevent this, we can transform problem (\ref{11}) into the following equivalent form by utilizing Lemma 1:

\begin{equation}\label{12}
\begin{aligned}
&\min _{\mathbf{W}}\left\|\mathbf{H}(\mathbf{X}^{T} \mathbf{W}-\mathbf{F})\right\|_{F}^{2}+\lambda\operatorname{Tr}(\mathbf{W}^{T} \mathbf{D}_{w}\mathbf{W})+\frac{1}{2}\alpha\sum_{i, j=1}^{n}\left\|\mathbf{W}^{T}\mathbf{x}_{i}-\mathbf{W}^{T}\mathbf{x}_{j}\right\|_{2}^{2} s_{i j}\\
&{s.t.}\quad \mathbf{W}^{T}\mathbf{R}_{t}^{'}\mathbf{W}=\mathbf{I}.
\end{aligned}
\end{equation}

By means of these two properties $\left\|\mathbf{A}\right\|_{F}^{2}=\operatorname{Tr}(\mathbf{A}^{T}\mathbf{A})$ and $\sum_{i, j=1}^{n}\left\|\mathbf{z}^{i}-\mathbf{z}^{j}\right\|_{2}^{2} s_{i j}=2\operatorname{Tr}(\mathbf{Z}^{T}\mathbf{L}_{s}\mathbf{Z})$, we have
\begin{equation}\nonumber
\begin{aligned}
&\min _{\mathbf{W}}\left\|\mathbf{H}(\mathbf{X}^{T} \mathbf{W}-\mathbf{F})\right\|_{F}^{2}+\lambda\operatorname{Tr}(\mathbf{W}^{T} \mathbf{D}_{w}\mathbf{W})+\frac{1}{2}\alpha\sum_{i, j=1}^{n}\left\|\mathbf{W}^{T}\mathbf{x}_{i}-\mathbf{W}^{T}\mathbf{x}_{j}\right\|_{2}^{2} s_{i j}\nonumber\\
\Leftrightarrow&\min _{\mathbf{W}}\operatorname{Tr(\mathbf{W}^{T}\mathbf{X}\mathbf{H}\mathbf{X}^{T}\mathbf{W}-2\mathbf{W}^{T}\mathbf{X}\mathbf{H}\mathbf{F}+\mathbf{F}^{T}\mathbf{H}\mathbf{F})}+\lambda\operatorname{Tr(\mathbf{W}^{T}\mathbf{D}_{w}\mathbf{W})}+\alpha\operatorname{Tr(\mathbf{W}^{T}\mathbf{X}\mathbf{L}_{s}\mathbf{X}^{T}\mathbf{W})}\nonumber\\
\Leftrightarrow&\min _{\mathbf{W}}\operatorname{Tr(\mathbf{W}^{T}(\mathbf{X}\mathbf{H}\mathbf{X}^{T}+\lambda\mathbf{D}_{w}+\alpha\mathbf{X}\mathbf{L}_{s}\mathbf{X}^{T})\mathbf{W})-2\operatorname{Tr(\mathbf{W}^{T}\mathbf{X}\mathbf{H}\mathbf{F})}}\nonumber\\
\Leftrightarrow&\min _{\mathbf{W}}\operatorname{Tr(\mathbf{W}^{T}\mathbf{R}_{t}^{\prime} {W})}-2\operatorname{Tr(\mathbf{W}^{T}\mathbf{X}\mathbf{H}\mathbf{F})}\nonumber\\
\Leftrightarrow & \min _{\mathbf{W}}-2\operatorname{Tr(\mathbf{W}^{T}\mathbf{X}\mathbf{H}\mathbf{F})}\nonumber\\
\Leftrightarrow&\max _{\mathbf{W}}\operatorname{Tr(\mathbf{W}^{T}\mathbf{X}\mathbf{H}\mathbf{F})}\label{13}
\end{aligned}
\end{equation}

Then problem~(\ref{12}) is equivalent to solve the following problem.
\begin{align}
&\max _{\mathbf{W}}\operatorname{Tr(\mathbf{W}^{T}\mathbf{X}\mathbf{H}\mathbf{F})}\label{13}\\
&\qquad{s.t.}\quad \mathbf{W}^{T}\mathbf{R}_{t}^{\prime}\mathbf{W}=\mathbf{I}\nonumber.
\end{align}
In order to solve problem (\ref{13}), Huang et al. has shown the optimal solution $\mathbf{A}=\mathbf{U}\mathbf{V}^{T}$ with regards to the problem $\max _{\mathbf{A}\mathbf{A}^{T}=\mathbf{I}}\operatorname{Tr(\mathbf{A}^{T}\mathbf{B})}$, where $\mathbf{U}$ and $\mathbf{V}$ are the left and right singular matrices of SVD decomposition on $\mathbf{B}$~\cite{Jin Huang2014}. Hence, let $\mathbf{A}=(\mathbf{R}_{t}^{\prime})^{\frac{1}{2}}\mathbf{W}$ and $\mathbf{B}=(\mathbf{R}_{t}^{\prime})^{-\frac{1}{2}}\mathbf{X}\mathbf{H}\mathbf{F}$, then we have the optimal solution of problem (\ref{12})  $\mathbf{W}=(\mathbf{R}_{t}^{\prime})^{-\frac{1}{2}}\mathbf{A}$. Since $\mathbf{D}_{w}$ also depends on $\mathbf{W}$, an iterative algorithm is developed as shown in Algorithm 1 to solve $\mathbf{W}$ in problem (\ref{11}).

\begin{algorithm}
\caption{Algorithm to solve $\mathbf{W}$ in problem (\ref{11})}
\KwIn{ \begin{enumerate}
         \item The coefficients $\lambda$ and $\alpha$;
         \item The data matrix $\mathbf{X}$ and centering matrix $\mathbf{H}$;
         \item The indicator matrix $\mathbf{F}$ and Laplacian matrix $\mathbf{L}_{s}$.
        \end{enumerate}
}
\KwOut{The feature selection matrix $\mathbf{W}\in \mathbb{R}^{d \times c}$.}

$\mathbf{Initialize:}$ $\mathbf{D}_{w}=\mathbf{I}\in \mathbb{R}^{d \times d}$.

\Begin
{
\While{not convergent}{

\quad With current $\mathbf{D}_{w}$, compute $\mathbf{R}_{t}^{\prime}=\mathbf{X}\mathbf{H}\mathbf{X}^{T}+\lambda_{w} \mathbf{D}_{w}+\alpha_{s}\mathbf{X}\mathbf{L}_{s}\mathbf{X}^T$;\\

\quad Calculate $\mathbf{B}=(\mathbf{R}_{t}^{\prime})^{-\frac{1}{2}}\mathbf{X}\mathbf{H}\mathbf{F}$;\\

\quad Solve the compact SVD decomposition on  $\mathbf{B}$ such that  $\mathbf{B}=\mathbf{U}\sum \mathbf{V}^{T}$;\\

\quad Compute $\mathbf{A}=\mathbf{U}\mathbf{V}^{T}$;\\

\quad Update $\mathbf{W}\leftarrow(\mathbf{R}_{t}^{\prime})^{-\frac{1}{2}}\mathbf{A}$;\\

\quad Update $\mathbf{D}=\operatorname{diag}\left(\frac{1}{2 \sqrt{\| \mathbf{w}^{1} \|_{2}^{2}+\varepsilon}}, \frac{1}{2 \sqrt{\| \mathbf{w}^{2} \|_{2}^{2}+\varepsilon}}, \cdots, \frac{1}{2 \sqrt{\| \mathbf{w}^{d} \|_{2}^{2}+\varepsilon}}\right)$.\\
    }
}
\end{algorithm}

\subsection{Fix $\mathbf{W}$ and $\mathbf{S}$ ,update $\mathbf{F}$ }
When updating $\mathbf{F}$ with the fixed $\mathbf{W}$ and $\mathbf{S}$,  we need to solve the following problem:

\begin{equation}\label{14}
\begin{aligned}
&\min _{\mathbf{F}}\left\|\mathbf{H}(\mathbf{X}^{T} \mathbf{W}-\mathbf{F})\right\|_{F}^{2}+\frac{1}{2}\alpha\operatorname{Tr(\mathbf{F}^{T}\mathbf{L}_{s}\mathbf{F})}\\
&{s.t.}\quad \mathbf{F}^{T}\mathbf{F}=\mathbf{I}.
\end{aligned}
\end{equation}

By utilization of the property of matrix $\left\|\mathbf{A}\right\|_{F}^{2}=\operatorname{Tr}(\mathbf{A}^{T}\mathbf{A})$, we have
\begin{align}
&\min _{\mathbf{F}^{T}\mathbf{F}=\mathbf{I}}\left\|\mathbf{H}(\mathbf{X}^{T} \mathbf{W}-\mathbf{F})\right\|_{F}^{2}+\frac{1}{2}\alpha\operatorname{Tr(\mathbf{F}^{T}\mathbf{L}_{s}\mathbf{F})}\nonumber\\
\Leftrightarrow&\min _{\mathbf{F}^{T}\mathbf{F}=\mathbf{I}}\operatorname{Tr(\mathbf{W}^{T}\mathbf{X}\mathbf{H}\mathbf{X}^{T}\mathbf{W}-2\mathbf{W}^{T}\mathbf{X}\mathbf{H}\mathbf{F}+\mathbf{F}^{T}\mathbf{H}\mathbf{F})+\frac{1}{2}\alpha\operatorname{Tr(\mathbf{F}^{T}\mathbf{L}_{s}\mathbf{F})}}\nonumber\\
\Leftrightarrow&\min _{\mathbf{F}^{T}\mathbf{F}=\mathbf{I}}\operatorname{Tr(\mathbf{F}^{T}\mathbf{Q}\mathbf{F}-2\mathbf{F}^{T}\mathbf{C})}\label{15},
\end{align}
where $\mathbf{Q}=\frac{\alpha}{2}\mathbf{L}_{s}+\mathbf{H}$, $\mathbf{C}=\mathbf{H}\mathbf{X}^{T}\mathbf{W}$. Obviously, problem (\ref{15}) is the standard quadratic problem on the Stiefel manifold which can be solved by the generalized power iteration method proposed by Nie et al.~\cite{F. Nie2017gpd}. Then the detailed algorithm is developed to solve problem (\ref{14}) in Algorithm 2.

\begin{algorithm}
\caption{Algorithm to solve $\mathbf{F}$ in problem (\ref{14})}
\KwIn{The matrices $\mathbf{Q}$ and $\mathbf{C}$ defined in (\ref{15})}
\KwOut{The indicator matrix $\mathbf{F}\in \mathbb{R}^{n \times c}$}
$\mathbf{Initialize:}$

(\romannumeral1) A random orthogonal matrix $\mathbf{F}\in \mathbb{R}^{n \times c}$ such that $\mathbf{F}^{T}\mathbf{ F}=\mathbf{I}$;\\
(\romannumeral2) Given a $\nu$ via power method~\cite{F. Nie2017gpd} such that $\widetilde{\mathbf{Q}}=\nu\mathbf{I}-\mathbf{Q} \in \mathbb{R}^{n \times n}$ satisfying positive definite definiteness.\\

\Begin
{
\While{not convergent}{

\quad Update $\mathbf{E}\leftarrow2\widetilde{\mathbf{Q}}\mathbf{F}+2\mathbf{C}$;\\

\quad Update $\mathbf{F}$ by solving the problem
$\max _{\mathbf{F}\mathbf{F}^{T}=\mathbf{I}}\operatorname{Tr(\mathbf{F}^{T}\mathbf{E})}$ according to the proposed method~\cite{Jin Huang2014}.\\
    }
}
\end{algorithm}

\subsection{Fix $\mathbf{W}$ and $\mathbf{F}$, update $\mathbf{S}$ }
With the fixed $\mathbf{W}$ and $\mathbf{F}$, problem (\ref{10}) is transformed to solve
\begin{equation}\label{16}
\begin{aligned}
&\min _{\mathbf{S}}\sum_{i, j=1}^{n}\left\|\mathbf{W}^{T}\mathbf{x}_{i}-\mathbf{W}^{T}\mathbf{x}_{j}\right\|_{2}^{2} s_{i j}+\beta \sum_{i=1}^{n}\left\|\mathbf{s}_{i}\right\|_{2}^{2}+\operatorname{Tr}(\mathbf{F}^{T} \mathbf{L}_{s} \mathbf{F})\\
&{s.t.}\quad s_{i i}=0, s_{i j} \geq 0, \mathbf{1}^{T} \mathbf{s}_{i}=1.
\end{aligned}
\end{equation}

By employing $\sum_{i, j=1}^{n}\left\|\mathbf{z}^{i}-\mathbf{z}^{j}\right\|_{2}^{2} s_{i j}=2\operatorname{Tr}(\mathbf{Z}^{T}\mathbf{L}_{s}\mathbf{Z})$, problem (\ref{16}) is equivalent to:

\begin{equation}\label{17}
\begin{aligned}
&\min _{\mathbf{S}}\sum_{i, j=1}^{n}(\left\|\mathbf{W}^{T}\mathbf{x}_{i}-\mathbf{W}^{T}\mathbf{x}_{j}\right\|_{2}^{2} +\frac{1}{2}\left\|\mathbf{f}^{i}-\mathbf{f}^{j}\right\|_{2}^{2})s_{i j}+\beta \sum_{i=1}^{n}\left\|\mathbf{s}_{i}\right\|_{2}^{2}\\
&{s.t.}\quad s_{i i}=0, s_{i j} \geq 0, \mathbf{1}^{T} \mathbf{s}_{i}=1.
\end{aligned}
\end{equation}

It can be seen that the problem (\ref{17}) is independent for different $i$. Then we can decouple problem (\ref{17}) into the following subproblem for each $i$.

\begin{equation}\label{18}
\begin{aligned}
&\min _{\mathbf{s}_{i}}\sum_{j=1}^{n}(\left\|\mathbf{W}^{T}\mathbf{x}_{i}-\mathbf{W}^{T}\mathbf{x}_{j}\right\|_{2}^{2} +\frac{1}{2}\left\|\mathbf{f}^{i}-\mathbf{f}^{j}\right\|_{2}^{2})s_{i j}+\beta \left\|\mathbf{s}_{i}\right\|_{2}^{2}\\
&{s.t.}\quad s_{i i}=0, s_{i j} \geq 0, \mathbf{1}^{T} \mathbf{s}_{i}=1.
\end{aligned}
\end{equation}

Let $g_{ij}=\left\|\mathbf{W}^{T}\mathbf{x}_{i}-\mathbf{W}^{T}\mathbf{x}_{j}\right\|_{2}^{2} +\frac{1}{2}\left\|\mathbf{f}^{i}-\mathbf{f}^{j}\right\|_{2}^{2}$ and the corresponding $i$-th coloumn vector is denoted as $\mathbf{g}_{i}$. Then problem (\ref{18}) is rewritten as

\begin{equation}\label{19}
\begin{aligned}
&\min _{\mathbf{s}_{i}}\sum_{j=1}^{n}\mathbf{g}_{ij}s_{i j}+\beta \left\|\mathbf{s}_{i}\right\|_{2}^{2}\\
&{s.t.}\quad s_{i i}=0, s_{i j} \geq 0, \mathbf{1}^{T} \mathbf{s}_{i}=1.
\end{aligned}
\end{equation}

Actually, problem (\ref{19}) is equivalent to solve the following problem, which differs by a constant term while $\mathbf{W}$ and $\mathbf{F}$ are fixed.

\begin{equation}\label{20}
\begin{aligned}
&\min _{\mathbf{s}_{i}}\frac{1}{2}\left\|\mathbf{s}_{i}+\frac{\mathbf{g}_{i}}{2 \beta}\right\|_{2}^{2} \\
&{s.t.}\quad s_{i i}=0, s_{i j} \geq 0, \mathbf{1}^{T} \mathbf{s}_{i}=1.
\end{aligned}
\end{equation}

The data points are more similar to their neighbours than to other non-neighbours in practice. Hence, we prefer the model in (\ref{20}) learning $\mathbf{s}_i$ with $k$ nonzero values, where $k$ is the number of neighbours. In what follows, we give the detailed solution steps.

Let the scalar $\psi$ and the vector $ \mathbf{\varphi} \geq 0$ be the lagrangian multipliers for the constraints $\mathbf{1}^{T} \mathbf{s}_{i}=1$ and $s_{i j} \geq 0$ in problem (\ref{20}), respectively. The Lagrangian function of problem (\ref{20}) is

\begin{equation}
\mathcal{L}(\mathbf{s}_{i}, \psi, \varphi)=\frac{1}{2}\left\|\mathbf{s}_{i}+\frac{\mathbf{g}_{i}}{2 \beta}\right\|_{2}^{2}-\psi(\mathbf{1}^{T}\mathbf{s}_{i}-1)-\mathbf{\varphi}^{T}\mathbf{s}_{i}.
\end{equation}

Taking the partial derivative of $\mathcal{L}$ w.r.t. $\mathbf{s}_{i}$ and setting it to zero, then

\begin{equation}\label{ds}
\mathbf{s}_{i}+\frac{\mathbf{g}_{i}}{2\beta}-\psi\mathbf{1}-\mathbf{\varphi}=0.
\end{equation}

For the $j$-th entry of $\mathbf{s}_{i}$ in Eq.(\ref{ds}), we obtain the following equations for $s_{i j}$:

\begin{equation}\label{ds1}
{s}_{i j}+\frac{g_{i j}}{2\beta}-\psi-\varphi_{j}=0.
\end{equation}

By the utilization of KKT condition $s_{i j}\varphi_{j}=0$, we have the solution of $s_{i j}$ denoted as $\hat{s}_{i j}$:
\begin{equation}\label{23}
\hat{s}_{i j}=(-\frac{g_{ij}}{2\beta}+\psi)_{+}.
\end{equation}

Assuming  $g_{i1},\ldots,g_{in}$ are sorted in an ascending order. Since the equality constraints $\mathbf{1}^{T} \mathbf{s}_{i}=1$, we can get

\begin{equation}\label{21}
\psi=\frac{1}{k}+\frac{1}{2k\beta}\sum_{j=1}^{k}g_{ij}.
\end{equation}

In order to obtain the $\mathbf{s}_{i}$ with the constraint of  $k$ non-zero entries, we prefer $\hat{s}_{ik}>0$ and $\hat{s}_{i,k+1}=0$. Then, we have

\begin{equation}\label{222}
\frac{-g_{ik}}{2\beta}+\psi>0 \quad and \quad\frac{-g_{i,k+1}}{2\beta}+\psi\leq0.
\end{equation}

According (\ref{21}) and (\ref{222}), we can get
\begin{equation}
\left\{\begin{array}{l}
\beta>\frac{kg_{ik}-\sum_{j=1}^{k}g_{ij}}{2}; \\
\beta \leq \frac{kg_{i,k+1}-\sum_{j=1}^{k}g_{ij}}{2}.
\end{array}\right.
\end{equation}

The optimal solution $\hat{\mathbf{s}}_{i j}$ having $k$ nonzero entries can obtain when $\beta$ is set as
\begin{equation}\label{22}
\beta=\frac{kg_{i,k+1}-\sum_{j}^{k}g_{ij}}{2}.
\end{equation}

Then, we can obtain the final solution of $\mathbf{s}_{i j}$ shown as follows according to Eqs. (\ref{23}), (\ref{21}) and (\ref{22}).

\begin{equation}\label{24}
s_{i j}=\left\{\begin{array}{cl}
\frac{g_{i, k+1}-g_{i j}}{k g_{i, k+1}-\sum_{h=1}^{k} g_{i h}} & j \leq k; \\
0 & j>k.
\end{array}\right.
\end{equation}

By optimizing each variable alternatively, the whole algorithm (AGUFS) to solve problem (\ref{8}) is summarized in Algorithm 3.

\begin{algorithm}
\caption{AGUFS to solve problem (\ref{8})}
\KwIn{ \begin{enumerate}
         \item The data matrix $\mathbf{X}\in \mathbb{R}^{d \times n}$ and the center matrix $\mathbf{H}$;
         \item The parameters $\alpha,\beta$ and $\lambda$;
         \item The number of neighbours $k$.
        \end{enumerate}
}
\KwOut{Selecting the top ranking $t$ features according to the descending order of $\left\|\mathbf{w}^{i}\right\|_{2}(i=1,2, \cdots, d)$.}

$\mathbf{Initialize:}$

(\romannumeral1) A random orthogonal matrix $\mathbf{F}\in \mathbb{R}^{n \times c}$;\\
(\romannumeral2) A random matrix  $\mathbf{W}\in \mathbb{R}^{d \times c}$;\\
(\romannumeral3) The similarity matrix $\mathbf{S}\in \mathbb{R}^{n \times n}$ by means of Eq.(\ref{24})\\

\Begin
{
\While{not convergent}{
\quad Fix $\mathbf{F}$ and $\mathbf{S}$,update $\mathbf{W}$ by Algorithm 1;\\
\quad Fix $\mathbf{W}$ and $\mathbf{S}$,update $\mathbf{F}$ by Algorithm 2;\\
\quad Fix $\mathbf{W}$ and $\mathbf{F}$,update $\mathbf{S}$ by Eq.(\ref{24}).\\
    }
}
\end{algorithm}

\section{Discussions}
In this section, we give the theoretical analysis of the convergence and computational complexity of the proposed algorithm, respectively.
\subsection{Convergence analysis}
\subsubsection{Convergence analysis of Algorithm 1}
In order to prove the convergence of Algorithm 1, we firstly introduce the following lemma.
\begin{lemma}\cite{NieFHuangH}\label{Lemma2}
For any two nonzero vectors $\mathbf{a}, \mathbf{b} \in \mathbb{R}^{c}$, the following inequality holds:
\begin{equation}
\|\mathbf{a}\|_{2}-\frac{\|\mathbf{a}\|_{2}^{2}}{2\|\mathbf{b}\|_{2}} \leq\|\mathbf{b}\|_{2}-\frac{\|\mathbf{b}\|_{2}^{2}}{2\|\mathbf{b}\|_{2}}.
\end{equation}
\end{lemma}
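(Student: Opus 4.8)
The plan is to reduce this vector inequality to an elementary one-variable scalar inequality, since only the Euclidean norms $\|\mathbf{a}\|_{2}$ and $\|\mathbf{b}\|_{2}$ appear. First I would set $x=\|\mathbf{a}\|_{2}$ and $y=\|\mathbf{b}\|_{2}$; because $\mathbf{a}$ and $\mathbf{b}$ are nonzero, both $x>0$ and $y>0$, so in particular dividing by $y$ is legitimate. The key simplification is that the right-hand side collapses, since $\frac{\|\mathbf{b}\|_{2}^{2}}{2\|\mathbf{b}\|_{2}}=\frac{y}{2}$, so the claimed inequality is exactly $x-\frac{x^{2}}{2y}\le\frac{y}{2}$.

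Next I would clear the positive denominator by multiplying through by $2y>0$, which preserves the inequality direction and turns the claim into $2xy-x^{2}\le y^{2}$. Moving every term to one side gives $0\le y^{2}-2xy+x^{2}=(x-y)^{2}$, which holds for all real $x,y$ since it is a perfect square. Reading the chain of equivalences backward then establishes the lemma, and one sees immediately that equality holds precisely when $x=y$, i.e. when $\|\mathbf{a}\|_{2}=\|\mathbf{b}\|_{2}$.

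There is essentially no obstacle here: the only point requiring care is the nonzero hypothesis on $\mathbf{b}$, which guarantees $\|\mathbf{b}\|_{2}>0$ so that the division and the clearing of denominators are valid; no assumption on $\mathbf{a}$ beyond $\mathbf{a}\neq\mathbf{0}$ is actually needed. The genuine content of this lemma lies in its downstream use: it is the standard concavity/reweighting inequality that underlies the majorization argument for the $\ell_{2,1}$-norm. I expect it to be applied row by row, with $\mathbf{a}$ and $\mathbf{b}$ taken to be the corresponding rows of the updated and current iterates of $\mathbf{W}$, to show that the reweighted surrogate $\operatorname{Tr}(\mathbf{W}^{T}\mathbf{D}_{w}\mathbf{W})$ controls $\|\mathbf{W}\|_{2,1}$ up to an additive constant, thereby yielding the monotone decrease of the objective in Algorithm 1.
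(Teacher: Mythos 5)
Your proof is correct: the paper itself states this lemma without proof (citing Nie et al.), and your reduction --- setting $x=\|\mathbf{a}\|_{2}$, $y=\|\mathbf{b}\|_{2}$, clearing the positive denominator $2y$, and landing on $(x-y)^{2}\ge 0$ --- is exactly the standard perfect-square argument from that cited reference. Your observations about the role of the hypothesis $\mathbf{b}\neq\mathbf{0}$ and the row-wise application to majorize $\|\mathbf{W}\|_{2,1}$ in the convergence proof of Algorithm 1 also match how the paper uses the lemma.
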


Based on the Lemma~\ref{Lemma2}, we give the proof of the convergence of Algorithm 1 by the following theorem.

\begin{theorem}
For each iterative updating $\mathbf{W}$ to its optimal solution in problem (\ref{12}), Algorithm 1 will decrease problem (\ref{11}) until convergence.
\end{theorem}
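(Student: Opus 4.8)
The plan is to establish monotone decrease of the objective of (\ref{11}) through a majorization--minimization (reweighting) argument, in which Lemma 1 justifies replacing the nonsmooth $\ell_{2,1}$ term by the reweighted quadratic term of (\ref{12}), and Lemma 2 controls the error committed by that replacement. Write $J(\mathbf{W})$ for the objective of (\ref{11}) and let $P(\mathbf{W})=\|\mathbf{H}(\mathbf{X}^{T}\mathbf{W}-\mathbf{F})\|_{F}^{2}+\tfrac{1}{2}\alpha\sum_{i,j=1}^{n}\|\mathbf{W}^{T}\mathbf{x}_{i}-\mathbf{W}^{T}\mathbf{x}_{j}\|_{2}^{2}s_{ij}$ denote the fidelity-plus-graph part that (\ref{11}) and (\ref{12}) share, so that $J(\mathbf{W})=P(\mathbf{W})+\lambda\|\mathbf{W}\|_{2,1}$ and the surrogate of (\ref{12}) is $\widetilde{J}(\mathbf{W},\mathbf{D})=P(\mathbf{W})+\lambda\operatorname{Tr}(\mathbf{W}^{T}\mathbf{D}\mathbf{W})$. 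Let $\mathbf{W}_{t}$ be the iterate produced at the $t$-th pass of the inner loop, let $\mathbf{D}_{w}^{(t)}$ be built from it via $\mathbf{D}_{w}^{(t)}(i,i)=1/(2\sqrt{\|\mathbf{w}_{t}^{i}\|_{2}^{2}+\varepsilon})$, and write $\mathbf{R}^{(t)}=\mathbf{X}\mathbf{H}\mathbf{X}^{T}+\lambda_{w}\mathbf{D}_{w}^{(t)}+\alpha_{s}\mathbf{X}\mathbf{L}_{s}\mathbf{X}^{T}$ for the reweighted matrix $\mathbf{R}_{t}'$ that Algorithm 1 forms at that pass.

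First I would record that the SVD step of Algorithm 1 returns the exact global minimizer $\mathbf{W}_{t+1}$ of $\widetilde{J}(\cdot,\mathbf{D}_{w}^{(t)})$, by the closed form for (\ref{13}) from Huang et al. Comparing its value against $\mathbf{W}_{t}$ and cancelling the common part $P$ gives $P(\mathbf{W}_{t+1})+\lambda\sum_{i}\frac{\|\mathbf{w}_{t+1}^{i}\|_{2}^{2}}{2\|\mathbf{w}_{t}^{i}\|_{2}}\le P(\mathbf{W}_{t})+\lambda\sum_{i}\frac{\|\mathbf{w}_{t}^{i}\|_{2}^{2}}{2\|\mathbf{w}_{t}^{i}\|_{2}}$ (I take the $\varepsilon\to0$ form for clarity; the finite-$\varepsilon$ case follows by working with the smoothed surrogate $\sum_{i}\sqrt{\|\mathbf{w}^{i}\|_{2}^{2}+\varepsilon}$ exactly as in Lemma 1). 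Next I would invoke Lemma 2 row by row with $\mathbf{a}=\mathbf{w}_{t+1}^{i}$ and $\mathbf{b}=\mathbf{w}_{t}^{i}$, obtaining $\|\mathbf{w}_{t+1}^{i}\|_{2}-\frac{\|\mathbf{w}_{t+1}^{i}\|_{2}^{2}}{2\|\mathbf{w}_{t}^{i}\|_{2}}\le\|\mathbf{w}_{t}^{i}\|_{2}-\frac{\|\mathbf{w}_{t}^{i}\|_{2}^{2}}{2\|\mathbf{w}_{t}^{i}\|_{2}}$ for each $i$. Multiplying by $\lambda$, summing over $i$, and adding the result to the previous inequality cancels the reweighted quadratic terms on both sides and leaves $P(\mathbf{W}_{t+1})+\lambda\|\mathbf{W}_{t+1}\|_{2,1}\le P(\mathbf{W}_{t})+\lambda\|\mathbf{W}_{t}\|_{2,1}$, i.e. $J(\mathbf{W}_{t+1})\le J(\mathbf{W}_{t})$. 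Since each of the three terms of $J$ is nonnegative, $J$ is bounded below, so the monotone nonincreasing sequence $\{J(\mathbf{W}_{t})\}$ converges, which is the assertion.

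The step I expect to be the main obstacle is the minimizer comparison $\widetilde{J}(\mathbf{W}_{t+1},\mathbf{D}_{w}^{(t)})\le\widetilde{J}(\mathbf{W}_{t},\mathbf{D}_{w}^{(t)})$ used to open the second paragraph: it tacitly assumes $\mathbf{W}_{t}$ is feasible for the constraint \emph{of the current pass}, namely $\mathbf{W}_{t}^{T}\mathbf{R}^{(t)}\mathbf{W}_{t}=\mathbf{I}$. Because $\mathbf{R}_{t}'$ itself depends on the reweighting, one computes $\mathbf{W}_{t}^{T}\mathbf{R}^{(t)}\mathbf{W}_{t}=\mathbf{I}+\lambda_{w}\mathbf{W}_{t}^{T}(\mathbf{D}_{w}^{(t)}-\mathbf{D}_{w}^{(t-1)})\mathbf{W}_{t}$, which is generally not $\mathbf{I}$, so $\mathbf{W}_{t}$ may leave the feasible set between passes and the comparison is not automatic. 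I would resolve this by exploiting that under the constraint the quadratic part is frozen, $\operatorname{Tr}(\mathbf{W}^{T}\mathbf{R}^{(t)}\mathbf{W})=\operatorname{Tr}(\mathbf{I})=c$, so minimizing (\ref{12}) is equivalent to maximizing $\operatorname{Tr}(\mathbf{W}^{T}\mathbf{X}\mathbf{H}\mathbf{F})$ as derived before (\ref{13}); I would then phrase the whole update as a genuine majorization--minimization step, building an auxiliary function $G(\mathbf{W}\mid\mathbf{W}_{t})$ that majorizes $J$ and touches it at $\mathbf{W}_{t}$, and show the exact SVD minimization decreases $G$ and hence $J$. Verifying that the moving $\mathbf{R}_{t}'$ does not corrupt this majorization is the only delicate point; the remaining manipulations are routine trace identities together with the concavity inequality of Lemma 2.
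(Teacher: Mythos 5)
Your first two paragraphs are, step for step, the paper's own proof: the paper opens with the minimizer comparison $\mathcal{J}(\mathbf{W}_{(t+1)},\mathbf{D}_{w(t)})\leq\mathcal{J}(\mathbf{W}_{(t)},\mathbf{D}_{w(t)})$ on the surrogate (\ref{12}) (its inequality (\ref{25})), then applies Lemma 2 row by row with $\mathbf{a}=\mathbf{w}_{(t+1)}^{i}$, $\mathbf{b}=\mathbf{w}_{(t)}^{i}$, multiplies by $\lambda$ (its inequality (\ref{26})), and adds the two so that the reweighted quadratic terms cancel, leaving the descent of (\ref{11}). Your closing remark that the objective is nonnegative, hence bounded below, so the monotone sequence converges, is a small (correct) supplement that the paper leaves implicit.

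The feasibility issue you raise in your final paragraph is genuine, and it is worth stating plainly: it is a gap in the \emph{paper's} proof as well, not only in your reconstruction. The paper justifies (\ref{25}) purely by the optimality of the SVD update, which tacitly requires the previous iterate $\mathbf{W}_{(t)}$ to lie in the feasible set $\{\mathbf{W}:\mathbf{W}^{T}\mathbf{R}^{(t)}\mathbf{W}=\mathbf{I}\}$ of the \emph{current} pass; but since $\mathbf{R}^{(t)}$ contains the reweighting term $\lambda_{w}\mathbf{D}_{w}^{(t)}$, your identity $\mathbf{W}_{t}^{T}\mathbf{R}^{(t)}\mathbf{W}_{t}=\mathbf{I}+\lambda_{w}\mathbf{W}_{t}^{T}(\mathbf{D}_{w}^{(t)}-\mathbf{D}_{w}^{(t-1)})\mathbf{W}_{t}$ shows $\mathbf{W}_{t}$ generally drifts out of it. Unfortunately, the repair you sketch does not close the gap: the reduction of (\ref{12}) to the trace maximization (\ref{13}) rests on the identity $\operatorname{Tr}(\mathbf{W}^{T}\mathbf{R}^{(t)}\mathbf{W})=c$, which is itself valid \emph{only on the current feasible set}, so neither the trace comparison nor the cancellation of the quadratic terms can be invoked at the infeasible point $\mathbf{W}_{t}$, and the majorization ``touching'' condition $G(\mathbf{W}_{t}\mid\mathbf{W}_{t})=J(\mathbf{W}_{t})$ cannot be established this way. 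The argument (yours and the paper's) would be rigorous if the constraint matrix were frozen across the inner iterations --- for instance if $\lambda_{w}\mathbf{D}_{w}$ were removed from $\mathbf{R}_{t}^{\prime}$, or if $\mathbf{R}_{t}^{\prime}$ were fixed at its value from the outer loop --- but as the algorithm is written, the monotone-descent claim needs an additional argument that neither you nor the paper supplies. You should regard this as a defect you have correctly detected in the published proof, not as a deficiency of your own write-up relative to it.
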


\begin{proof}
Let $\mathcal{J}(\mathbf{W}_{(t)},\mathbf{D}{w}_{(t)})$ denote the objective value of problem (\ref{12}) in the $t$-th iteration. Since Algorithm 1 updates $\mathbf{W}$ and $\mathbf{D}$ with the optimal solution in problem (\ref{12}), it always holds
\begin{equation}\label{25}
\begin{aligned}
&\mathcal{J}(\mathbf{W}_{(t+1)},\mathbf{D}_{w(t)}) \leq \mathcal{J}(\mathbf{W}_{(t)},\mathbf{D}_{w(t)})\\
\Rightarrow&\left\|\mathbf{H}(\mathbf{X}^{T} \mathbf{W}_{(t+1)}-\mathbf{F})\right\|_{F}^{2}+\lambda\operatorname{Tr}(\mathbf{W}_{(t+1)}^{T} \mathbf{D}_{w(t)}\mathbf{W})_{(t+1)}+\frac{1}{2}\alpha\sum_{i, j=1}^{n}\left\|\mathbf{W}_{(t+1)}^{T}\mathbf{x}_{i}-\mathbf{W}_{(t+1)}^{T}\mathbf{x}_{j}\right\|_{2}^{2} s_{i j}\\
&\leq \left\|\mathbf{H}(\mathbf{X}^{T} \mathbf{W}_{(t)}-\mathbf{F})\right\|_{F}^{2}+\lambda\operatorname{Tr}(\mathbf{W}_{(t)}^{T} \mathbf{D}_{w(t)}\mathbf{W}_{(t)})+\frac{1}{2}\alpha\sum_{i, j=1}^{n}\left\|\mathbf{W}_{(t)}^{T}\mathbf{x}_{i}-\mathbf{W}_{(t)}^{T}\mathbf{x}_{j}\right\|_{2}^{2} s_{i j}.
\end{aligned}
\end{equation}

By means of Lemma 2, we have

\begin{align}
&\|\mathbf{w}_{(t+1)}^{i}\|_{2}-\frac{1}{2\|\mathbf{w}_{(t)}^{i}\|_{2}}\|\mathbf{w}_{(t+1)}^{i}\|_{2}^{2} \leq\|\mathbf{w}_{(t)}^{i}\|_{2}-\frac{1}{2\|\mathbf{w}_{(t)}^{i}\|_{2}}\|\mathbf{w}_{(t)}^{i}\|_{2}^{2}\nonumber\\
\Rightarrow&\sum_{i=1}^{d}\|\mathbf{w}_{(t+1)}^{i}\|_{2}-\sum_{i=1}^{d}\frac{1}{2\|\mathbf{w}_{(t)}^{i}\|_{2}}\|\mathbf{w}_{(t+1)}^{i}\|_{2}^{2} \leq\sum_{i=1}^{d}\|\mathbf{w}_{(t)}^{i}\|_{2}-\sum_{i=1}^{d}\frac{1}{2\|\mathbf{w}_{(t)}^{i}\|_{2}}\|\mathbf{w}_{(t)}^{i}\|_{2}^{2}\nonumber\\
\Rightarrow&\|\mathbf{W}_{(t+1)}\|_{2,1}-\operatorname{Tr}(\mathbf{W}_{(t+1)}^{T}\mathbf{D}_{w}{(t)}\mathbf{W}_{(t+1)}) \leq \|\mathbf{W}_{(t)}\|_{2,1}-\operatorname{Tr}(\mathbf{W}_{(t)}^{T}\mathbf{D}_{w}{(t)}\mathbf{W}_{(t)})\nonumber\\
\Rightarrow&\lambda\|\mathbf{W}_{(t+1)}\|_{2,1}-\lambda\operatorname{Tr}(\mathbf{W}_{(t+1)}^{T}\mathbf{D}_{w}{(t)}\mathbf{W}_{(t+1)}) \leq \lambda\|\mathbf{W}_{(t)}\|_{2,1}-\lambda\operatorname{Tr}(\mathbf{W}_{(t)}^{T}\mathbf{D}_{w}{(t)}\mathbf{W}_{(t)})(\lambda>0)\label{26}.
\end{align}

Adding Eqs. (\ref{25}) and (\ref{26}), we can get

\begin{equation}
\begin{aligned}
&\left\|\mathbf{H}(\mathbf{X}^{T} \mathbf{W}_{(t+1)}-\mathbf{F})\right\|_{F}^{2}+\lambda\|\mathbf{W}_{(t+1)}\|_{2,1}+\frac{1}{2}\alpha\sum_{i, j=1}^{n}\left\|\mathbf{W}^{T}_{(t+1)}\mathbf{x}_{i}-\mathbf{W}^{T}_{(t+1)}\mathbf{x}_{j}\right\|_{2}^{2} s_{i j}\\
&\leq \left\|\mathbf{H}(\mathbf{X}^{T} \mathbf{W}_{(t)}-\mathbf{F})\right\|_{F}^{2}+\lambda\|\mathbf{W}_{(t)}\|_{2,1}+\frac{1}{2}\alpha\sum_{i, j=1}^{n}\left\|\mathbf{W}^{T}_{(t)}\mathbf{x}_{i}-\mathbf{W}_{(t)}^{T}\mathbf{x}_{j}\right\|_{2}^{2} s_{i j}.\\
\end{aligned}
\end{equation}

Then, we can see that the objective function value of problem (\ref{11}) monotonically decreases by Algorithm 1 in each iteration.
\end{proof}

\subsubsection{Convergence analysis of Algorithm 2}
In a similar way, the convergence proof of Algorithm 2 is given as follows.
\begin{theorem}
Algorithm 2 decreases problem (\ref{14}) by iteratively updating $\mathbf{F}$ with its optimal solution to problem $\max _{\mathbf{F}\mathbf{F}^{T}=\mathbf{I}}\operatorname{Tr(\mathbf{F}^{T}\mathbf{R})}$ until convergence, where $\mathbf{R}=2\widetilde{\mathbf{Q}}\mathbf{F}+2\mathbf{C},\widetilde{\mathbf{Q}}=\nu\mathbf{I}-\mathbf{Q}$.
\end{theorem}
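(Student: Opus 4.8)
The plan is to recast the minimization in problem~(\ref{15}) as the maximization of a \emph{convex} surrogate and then exploit the first-order convexity inequality together with the optimality of the inner linear subproblem. Writing the objective of~(\ref{15}) as $g(\mathbf{F})=\operatorname{Tr}(\mathbf{F}^{T}\mathbf{Q}\mathbf{F})-2\operatorname{Tr}(\mathbf{F}^{T}\mathbf{C})$ and substituting $\mathbf{Q}=\nu\mathbf{I}-\widetilde{\mathbf{Q}}$, I would use $\operatorname{Tr}(\mathbf{F}^{T}\mathbf{F})=\operatorname{Tr}(\mathbf{I})=c$ on the feasible set $\mathbf{F}^{T}\mathbf{F}=\mathbf{I}$ to obtain $g(\mathbf{F})=\nu c-h(\mathbf{F})$, where $h(\mathbf{F})=\operatorname{Tr}(\mathbf{F}^{T}\widetilde{\mathbf{Q}}\mathbf{F})+2\operatorname{Tr}(\mathbf{F}^{T}\mathbf{C})$. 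Since $\nu c$ is constant on the manifold, decreasing the objective of~(\ref{14}) is equivalent to increasing $h$.

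Next I would argue that $h$ is convex in $\mathbf{F}$. Because $\nu$ is chosen by the power method in the initialization of Algorithm 2 so that $\widetilde{\mathbf{Q}}=\nu\mathbf{I}-\mathbf{Q}$ is positive definite, the quadratic term $\operatorname{Tr}(\mathbf{F}^{T}\widetilde{\mathbf{Q}}\mathbf{F})$ is convex and the term $2\operatorname{Tr}(\mathbf{F}^{T}\mathbf{C})$ is linear, so $h$ is convex with gradient $\nabla_{\mathbf{F}}h=2\widetilde{\mathbf{Q}}\mathbf{F}+2\mathbf{C}$. This gradient is precisely the matrix $\mathbf{R}$ (equivalently the matrix $\mathbf{E}$ assembled in the update step of Algorithm 2).

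The core of the argument is the convexity inequality evaluated at consecutive iterates. For $\mathbf{R}_{(t)}=2\widetilde{\mathbf{Q}}\mathbf{F}_{(t)}+2\mathbf{C}$, convexity of $h$ gives
\[
h(\mathbf{F}_{(t+1)})\geq h(\mathbf{F}_{(t)})+\operatorname{Tr}\!\left(\mathbf{R}_{(t)}^{T}(\mathbf{F}_{(t+1)}-\mathbf{F}_{(t)})\right).
\]
The update step replaces $\mathbf{F}_{(t)}$ by the maximizer of $\operatorname{Tr}(\mathbf{F}^{T}\mathbf{R}_{(t)})$ over the same orthogonality constraint, obtained from the SVD-based Procrustes solution of~\cite{Jin Huang2014}; since $\mathbf{F}_{(t)}$ is itself feasible, $\operatorname{Tr}(\mathbf{F}_{(t+1)}^{T}\mathbf{R}_{(t)})\geq\operatorname{Tr}(\mathbf{F}_{(t)}^{T}\mathbf{R}_{(t)})$, so the trace term above is nonnegative. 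Hence $h(\mathbf{F}_{(t+1)})\geq h(\mathbf{F}_{(t)})$, which is exactly $g(\mathbf{F}_{(t+1)})\leq g(\mathbf{F}_{(t)})$, i.e. the objective of problem~(\ref{14}) does not increase at any iteration. Finally, since $\mathbf{L}_{s}\succeq 0$ makes the objective of~(\ref{14}) bounded below and the feasible Stiefel manifold is compact, this monotone bounded-below sequence converges.

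The main obstacle is the opening reduction: one must absorb the constant contributed by the orthogonality constraint correctly so that minimizing $g$ becomes maximizing the \emph{convex} $h$, and then recognize that $\mathbf{R}$ is exactly $\nabla_{\mathbf{F}}h$ so that the convexity bound couples cleanly with the optimality of the linear subproblem. Everything rests on $\widetilde{\mathbf{Q}}\succeq 0$, which is guaranteed by the choice of $\nu$; without positive definiteness $h$ would not be convex and the first-order inequality, hence the monotone decrease, would fail.
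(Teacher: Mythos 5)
Your proof is correct and follows essentially the same route as the paper's: both arguments rest on the Procrustes-subproblem optimality $\operatorname{Tr}(\mathbf{F}_{(t+1)}^{T}\mathbf{R}_{(t)})\geq\operatorname{Tr}(\mathbf{F}_{(t)}^{T}\mathbf{R}_{(t)})$ combined with the non-negativity of the quadratic form $\operatorname{Tr}\left((\mathbf{F}_{(t+1)}-\mathbf{F}_{(t)})^{T}\widetilde{\mathbf{Q}}(\mathbf{F}_{(t+1)}-\mathbf{F}_{(t)})\right)$, which the paper obtains via Cholesky factorization and $\left\|\mathbf{Z}\mathbf{F}_{(t+1)}-\mathbf{Z}\mathbf{F}_{(t)}\right\|_{F}^{2}\geq 0$ and you obtain as the first-order convexity inequality for $h$ --- the same inequality in different packaging. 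The only differences are cosmetic: you absorb the constant $\nu\operatorname{Tr}(\mathbf{F}^{T}\mathbf{F})=\nu c$ at the start rather than converting from $\widetilde{\mathbf{Q}}$ back to $\mathbf{Q}$ at the end, and you make explicit the bounded-below/compactness remark that the paper leaves implicit.
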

\begin{proof}
Let $\mathcal{Y}(\mathbf{F}_{(t)})$ denote the objective function value of problem (\ref{15}) in the $t$-th iteration. Because Algorithm 2 obtains the optimal solution of the problem $\max _{\mathbf{F}\mathbf{F}^{T}=\mathbf{I}}\operatorname{Tr(\mathbf{F}^{T}\mathbf{R})}$, for the ($t+1$)-th iteration, it is easy to see that

\begin{equation}
\begin{aligned}
&\operatorname{Tr}(\mathbf{F}_{(t+1)}^{T}\mathbf{R}_{(t)}) \geq \operatorname{Tr}(\mathbf{F}_{(t)}^{T}\mathbf{R}_{(t)})\\
\Rightarrow&\operatorname{Tr}(\mathbf{F}_{(t+1)}^{T}\widetilde{\mathbf{Q}}\mathbf{F}_{(t)}+\mathbf{F}_{(t+1)}^{T}\mathbf{C}) \geq \operatorname{Tr}(\mathbf{F}_{(t)}^{T}\widetilde{\mathbf{Q}}\mathbf{F}_{(t)}+\mathbf{F}_{(t)}^{T}\mathbf{C}).\\
\end{aligned}
\end{equation}

Since $\widetilde{\mathbf{Q}}$ is initialized as a real symmetric positive definite matrix in \cite{F. Nie2017gpd}, we have $\widetilde{\mathbf{Q}}=\mathbf{Z}^{T}\mathbf{Z}$ by Choleshy factorization~\cite{Cholesky factorization}. Then,

\begin{equation}\label{27}
\begin{aligned}
\operatorname{Tr}(\mathbf{F}_{(t+1)}^{T}\mathbf{Z}^{T}\mathbf{Z}\mathbf{F}_{(t)}+\mathbf{F}_{(t+1)}^{T}\mathbf{C}) \geq \operatorname{Tr}(\mathbf{F}_{(t)}^{T}\mathbf{Z}^{T}\mathbf{Z}\mathbf{F}_{(t)}+\mathbf{F}_{(t)}^{T}\mathbf{C}).\\
\end{aligned}
\end{equation}

Furthermore, since $\left\|\mathbf{Z}\mathbf{F}_{(t+1)}-\mathbf{Z}\mathbf{F}_{(t)}\right\|_{F}^{2} \geq 0$, then we have
\begin{equation}\label{28}
\operatorname{Tr}(\mathbf{F}_{(t+1)}^{T}\mathbf{Z}^{T}\mathbf{Z}\mathbf{F}_{(t+1)})+\operatorname{Tr}(\mathbf{F}_{(t)}^{T}\mathbf{Z}^{T}\mathbf{Z}\mathbf{F}_{(t)})-2\operatorname{Tr}(\mathbf{F}_{(t+1)}^{T}\mathbf{Z}^{T}\mathbf{Z}\mathbf{F}_{(t)})\geq 0.
\end{equation}

By calculating $2 \times(\ref{27}) + (\ref{28})$, we obtain

\begin{equation}
\begin{aligned}
&2\operatorname{Tr}(\mathbf{F}_{(t+1)}^{T}\mathbf{C})+\operatorname{Tr}(\mathbf{F}_{(t+1)}^{T}\widetilde{\mathbf{Q}}\mathbf{F}_{(t+1)})\geq 2\operatorname{Tr}(\mathbf{F}_{(t)}^{T}\mathbf{C})+\operatorname{Tr}(\mathbf{F}_{(t)}^{T}\widetilde{\mathbf{Q}}\mathbf{F}_{(t)})\\
\Rightarrow&2\operatorname{Tr}(\mathbf{F}_{(t+1)}^{T}\mathbf{C})-\operatorname{Tr}(\mathbf{F}_{(t+1)}^{T}\mathbf{Q}\mathbf{F}_{(t+1)})\geq 2\operatorname{Tr}(\mathbf{F}_{(t)}^{T}\mathbf{C})-\operatorname{Tr}(\mathbf{F}_{(t)}^{T}\mathbf{Q}\mathbf{F}_{(t)})\\
\Rightarrow&\mathcal{Y}(\mathbf{F}_{(t+1)})\geq\mathcal{Y}(\mathbf{F}_{(t)}).
\end{aligned}
\end{equation}

Hence, in each iteration, the objective function values of problem (\ref{15}) decreased by Algorithm 2. Since problem (\ref{14}) is equivalent to problem (\ref{15}), then the proof of Theorem 5.2 is completed.
\end{proof}

In addition, it is easy to prove that the objective function of problem (\ref{17}) is convex with regards to $\mathbf{S}$. Hence, the closed-form solution of $\mathbf{S}$ employing the augmented Lagrange multiplier method can converge  to the globally optimal solution. In summary, the proposed Algorithm AGUFS is converge.

\subsection{Complexity analysis}
In this section, we analyze the computational complexity of the proposed algorithm AGUFS which consists of three parts. Specifically, while updating the feature selection matrix $\mathbf{W}$ by optimizing problem (\ref{13}), the time complexity is $\mathcal{O}(d^{3})$ in the Step 7 of Algorithm AGUFS. While computing the indicator matrix $\mathbf{F}$ by solving problem (\ref{15}), the time complexity is $\mathcal{O}(n^{2}d)$ in the Step 8 of Algorithm AGUFS. While the update of the similarity matrix $\mathbf{S}$ by optimizing problem (\ref{17}), the corresponding time complexity is $\mathcal{O}(nkd)$ in the Step 9 of Algorithm AGUFS, where $k$ is the number of neighbours. To sum up, the time complexity of AGUFS is $\mathcal{O}(d^{3}t+n^{2}dt+nkdt)$, where $t$ is the number of iteration. Hence, the computational complexity is cubic with the number of features and square with the number of samples. The main complexity is due to eigen-decomposition procedure, which also exists in other embedding-based feature methods ~\cite{Chenping Hou2014,X. Li2019,SOGFS}. It will result in the badly scaled  for the large-scale data. We will discuss this topic in our future work.

\section{Experiments}
In this section, we conduct a series of experiments on nine benchmark data sets to demonstrate the effectiveness of the proposed algorithm in comparison with  other competing methods.
\subsection{Experimental schemes}
\subsubsection{Datasets}
Nine real-world datasets chosen from different fields are used to evaluate the performance of the proposed method. It includes four face image datasets COIL20\footnote{https://jundongl.github.io/scikit-feature}~\cite{COIL20}, ${\rm Umist}^{1}$~\cite{Umist}, ${\rm JAFFE}^{1}$~\cite{JAFFE} and ${\rm WarpPIE10P}^{1}$~\cite{WarpPIE10P}, two biological datasets ${\rm Lung}^{1}$~\cite{Lung} and ${\rm Lymphoma}^{1}$~\cite{ly}, two handwritten digit datasets ${\rm USPS}^{1}$~\cite{USPS} and MFEA\footnote{http://archive.ics.uci.edu/ml/index.php}~\cite{MFEA}, and one sound dataset ${\rm Isolet}^{1}$~\cite{Isolet}. The detailed statistics of these datasets are shown in Table 1.

\begin{table}[!htb]
\tabcolsep 0pt
\caption{A detail description of datasets}
\vspace*{-15pt}
    \begin{flushleft}
    \def\temptablewidth{\textwidth}
        {\rule{\temptablewidth}{1pt}}
        \begin{tabular*}{\temptablewidth}{@{\extracolsep{\fill}}lllll}
            No.&Datasets & Samples & Features & Classes  \\
            \hline
            1 & Lymphoma     &  96  & 4026 & 9 \\
            2 & Lung         &  203  & 3312 & 5 \\
            3 & warpPIE10P   & 210  & 2420 & 10 \\
            4 & JAFFE        & 213  & 676 & 10 \\
            5 & Umist        &  575  & 644 & 20  \\
            6 & COIL20       &  1440 & 1024 & 20\\
            7 & Isolet       &  1560  & 617 & 26 \\
            8 & MFEA         &  2000  & 216 & 10 \\
            9 & USPS         &  9298  & 256 & 10 \\
        \end{tabular*}
        {\rule{\temptablewidth}{1pt}}
        \end{flushleft}
\end{table}

\subsubsection{Baselines}
In order to validate the effectiveness of the proposed method, comparisons are made with seven baseline methods of unsupervised feature selection. These approaches are briefly introduced as follows:

URAFS~\cite{X. Li2019}: Uncorrelated Regression with Adaptive graph for unsupervised Feature Selection (URAFS) employs the generalized uncorrelated regression model to construct the adaptive graph for the selection of discriminative and uncorrelated features.

NDFS~\cite{NDFS}: Nonnegative Discriminative Feature Selection selects features by a joint framework integrating nonnegative spectral analysis and $\ell_{2,1}$ -norm regularized regression model.

JELSR~\cite{Chenping Hou2014}: Joint Embedding Learning and Sparse Regression is a feature selection framework by the combination of the embedded learning and the sparse regression model.

UDFS~\cite{UDFS}: Unsupervised Discriminate Feature Selection selects features via integrating the discriminative analysis and  $\ell_{2,1}$-norm minimization.

RSFS~\cite{Lei Shi2014}: Robust Spectral Feature Selection is a robust spectral learning framework for unsupervised feature selection, which combines the robust graph embedding and robust sparse spectral regression model.

MCFS~\cite{D. Cai2010}: Multi-Cluster Feature Selection (MCFS) which selects features by spectral analysis and the sparse regression model.

LapScore~\cite{LapScore}: Laplacian Score selects features according to the best capacity of preserving the locality structure.

\subsubsection{Evaluation metrics}
Two clustering metrics including clustering accuracy (ACC) and normalized mutual information (NMI) are employed to evaluate the performance of the abovementioned unsupervised feature selection methods.

Clustering  accuracy (ACC)~\cite{ACC}: Let $y_{i}$ and $\widetilde{y_{i}}$ denote the true label and clustering result of the $i$-th sample $\mathbf{x}_i (i=1,2,\cdots,n)$, respectively. Then ACC is defined as follows:
\begin{equation}
\mathrm{ACC}=\frac{1}{n} \sum_{i=1}^{n} \delta\left(y_{i}, \operatorname{map}\left(\widetilde{y_{i}}\right)\right),
\end{equation}
where $\delta(x,y)$ is the delta function satisfying $\delta(x,y)=1$ for $x=y$, otherwise $\delta(x,y)=0$, and $\operatorname{map}(\cdot)$ is the permutation mapping function which maps each cluster index to the best ground true label. It can be computed by using Kuhn-Munkres algorithm\cite{KM}.

Normalized mutual information (NMI)~\cite{NMI}: Let $\mathcal{C}$ be the set of ground truth labels and $\mathcal{C}^{\prime}$ denote the set of obtained clusters. Then NMI is formulated as follows:

\begin{equation}
\operatorname{NMI}\left(\mathcal{C}, \mathcal{C}^{\prime}\right)=\frac{\operatorname{MI}\left(\mathcal{C}, \mathcal{C}^{\prime}\right)}{\max \left(H(\mathcal{C}), H\left(\mathcal{C}^{\prime}\right)\right)},
\end{equation}
where $\operatorname{MI}(\mathcal{C},\mathcal{C}^{\prime})$ is the mutual information of $\mathcal{C}$ and $\mathcal{C}^{\prime}$, $H(\mathcal{C})$ and $H(\mathcal{C}^{\prime})$ are the information entropies of $\mathcal{C}$ and $\mathcal{C}^{\prime}$, respectively.

The larger the values of these two metrics are, the better performance is.

\subsubsection{Comparison settings}
In order to keep fair comparison with other baseline methods in the experiments, we use the same strategy to determine the optimal parameters for all methods. Namely, all parameters are tuned by searching the grid $\left\{10^{-3}, 10^{-2}, 10^{-1}\right.$ $\left.1,10,10^{2}, 10^{3}\right\}$. And the number of nearest neighbors is searched in $\left\{5, 10, 15\right\}$. For the selected features using the abovementioned unsupervised feature selection methods, we utilize K-Means clustering algorithm~\cite{kmeans} to evaluate the performance of all compared methods in terms of ACC and NMI. To avoid the occasionality, we repeatedly perform K-Means clustering 30 times and record the means and standard deviations for all methods. In addition, the value $k$ in the K-Means clustering algorithm is set to the true number of classes for each dataset.

\subsection{Comparisons over clustering performance}
In this section, we compare the performance of different unsupervised feature selection methods by the clustering results in terms of ACC and NMI. Tables 2 and 3 show the clustering results on top 60 features under nine benchmark datasets. The bold values indicate the best performance among eight feature selection approaches. According to the experimental results, the proposed method AGUFS is superior to other unsupervised feature selection approaches in terms of ACC and NMI. Especially, the ACC of our method are at least 6\% higher than that of the runner-up method on the data sets WarPIE10P, JAFFE and MFEA. And  the NMI of our method are around 5\% better than that of the runner-up method on the data sets WarPIE10P, Umist, Lung and MFEA.

Furthermore, we perform clustering tasks under selecting different number of features. Figs. 1 and 2 show the clustering results ACC and NMI with the increase of the number of features from 20 to 180, respectively. In Figs. 1 and 2, the $x$-coordinate pertains to the number of selected features and $y$-coordinate pertains to ACC and NMI, respectively. From these two figures, we can find that our method outperforms other compared methods at most time. In particular, the optimal ACC and NMI of the proposed method AGUFS are around 5\% larger than that of the second best method. Hence, these experimental results demonstrate the superiority of our method.

\begin{table*}[!htbp]
\tabcolsep 0pt
\caption{Means and standard deviation of ACC for different unsupervised feature selection methods}\small \label{KNN}
\vspace*{-5pt}
    \begin{flushleft}
    \def\temptablewidth{\textwidth}
        {\rule{\temptablewidth}{1pt}}
        \begin{tabular*}{\temptablewidth}{@{\extracolsep{\fill}}lllllllll}
            \multicolumn{9}{c}{Part A: Means}\\
            Data sets  & AGUFS  &  URAFS& NDFS  & JELSR & UDFS & RSFS & MCFS & LapScore \\
            \hline
             Lymphoma   &$\mathbf{0.5906}$  &$0.5528$   &$0.4830$       &$0.3778$   &$0.5003$  &$0.4670$  &$0.5524$  &$0.4701$ \\
             Lung  &$\mathbf{0.7255}$  &$0.6300$   &$0.5885$  &$0.6501$   &$0.6245$ &$0.5365$  &$0.6876$  &$0.5220$            \\
             WapPIE10P  &$\mathbf{0.4349}$   &$0.2570$  &$0.2648$ &$0.2597$ &$0.3809$  &$0.2794$  &$0.2884$   &$0.3327$            \\
             JAFFE   &$\mathbf{0.8532}$  &$0.7772$   &$0.7449$  &$0.7556$   &$0.7380$ &$0.6775$  &$0.7884$  &$0.6651$           \\
             Umist  &$\mathbf{0.5008}$  &$0.4598$   &$0.4235$  &$0.4715$   &$0.4924$ &$0.4402$  &$0.4708$  &$0.3998$            \\
             COIL20  &$\mathbf{0.6596}$   &$0.6416$   &$0.5670$   &$0.6440$  &$0.6024$  &$0.5674$  &$0.5637$  &$0.5700$     \\
             Isolet   &$\mathbf{0.5849}$  &$0.5836$   &$0.5153$    &$0.3979$   &$0.4837$  &$0.4969$  &$0.5571$  &$0.4643$          \\
             MFEA   &$\mathbf{0.6860}$  &$0.6220$   &$0.6290$  &$0.6284$  &$0.6260$   &$0.6267$  &$0.5685$    &$0.6592$          \\
             USPS   &$\mathbf{0.6667}$  &$0.6306$   &$0.5842$  &$0.5745$   &$0.6122$  &$0.5749$  &$0.6398$  &$0.3197$  \\

            \hline
            \multicolumn{9}{c}{Part B: Standard deviations}\\
            Data sets & AGUFS  &  URAFS& NDFS  & JELSR & UDFS & RSFS & MCFS & LapScore\\
            \hline
             Lymphoma   &$0.0275$  &$0.0301$   &$0.0386$       &$0.0418$   &$0.0291$  &$0.0720$  &$0.0718$  &$0.0354$          \\
             Lung  &$0.0734$  &$0.0923$   &$0.0599$  &$0.0550$   &$0.0698$ &$0.0676$  &$0.0853$  &$0.0472$            \\
             WapPIE10P  &$0.0311$   &$0.0236$  &$0.0163$ &$0.0132$ &$0.0175$  &$0.0240$  &$0.0323$   &$0.0306$            \\
             JAFFE   &$0.0617$  &$0.0934$   &$0.0399$  &$0.0457$   &$0.0339$ &$0.0500$  &$0.0735$  &$0.0314$           \\
             Umist  &$0.0258$  &$0.0249$   &$0.0231$  &$0.0309$   &$0.0198$ &$0.0264$  &$0.0243$  &$0.0162$            \\
             COIL20  &$0.0268$   &$0.0221$   &$0.0398$   &$0.0224$  &$0.0186$  &$0.0493$  &$0.0429$  &$0.0106$     \\
             Isolet   &$0.0239$  &$0.0409$   &$0.0405$    &$0.0185$   &$0.0125$  &$0.0265$  &$0.0366$  &$0.0152$          \\
             MFEA   &$0.0148$  &$0.0290$   &$0.0135$  &$0.0179$  &$0.0240$   &$0.0534$  &$0.0431$    &$0.0076$          \\
             USPS   &$0.0198$  &$0.0339$   &$0.0006$  &$0.0046$   &$0.0038$  &$0.0268$  &$0.0343$  &$0.0014$          \\
            \hline
        \end{tabular*}
    \end{flushleft}
\end{table*}

\begin{table*}[h]
\tabcolsep 0pt
\caption{Means and standard deviation of NMI for different unsupervised feature selection methods}\small \label{KNN}
\vspace*{-5pt}
    \begin{flushleft}
    \def\temptablewidth{\textwidth}
        {\rule{\temptablewidth}{1pt}}
        \begin{tabular*}{\temptablewidth}{@{\extracolsep{\fill}}lllllllll}
            \multicolumn{9}{c}{Part A: Means}\\
            Data sets  & AGUFS  &  URAFS& NDFS  & JELSR & UDFS & RSFS & MCFS & LapScore \\
            \hline
             Lymphoma   &$\mathbf{0.6846}$  &$0.6502$   &$0.5324$  &$0.3801$   &$0.6188$  &$0.4798$  &$0.5865$  &$0.5286$          \\
             Lung  &$\mathbf{0.6073}$  &$0.5034$   &$0.4951$  &$0.4970$   &$0.4961$ &$0.3644$  &$0.5206$  &$0.3844$            \\
             WapPIE10P  &$\mathbf{0.4482}$   &$0.2573$  &$0.2902$ &$0.2704$ &$0.3594$ &$0.2497$  &$0.2868$  &$0.3157$            \\
             JAFFE   &$\mathbf{0.8732}$  &$0.8182$   &$0.7989$  &$0.7715$   &$0.7154$ &$0.7296$  &$0.8457$  &$0.6961$           \\
             Umist  &$\mathbf{0.7098}$  &$0.6681$   &$0.6014$  &$0.6654$   &$0.6579$ &$0.6198$  &$0.6677$  &$0.5900$            \\
             COIL20  &$\mathbf{0.7601}$   &$0.7441$   &$0.6936$   &$0.7528$  &$0.6989$  &$0.7082$  &$0.7130$  &$0.6926$     \\
             Isolet   &$\mathbf{0.7010}$  &$0.6881$   &$0.6406$    &$0.5307$   &$0.6173$  &$0.6241$  &$0.6834$  &$0.6271$          \\
             MFEA   &$\mathbf{0.6679}$  &$0.6133$   &$0.6106$  &$0.6108$   &$0.6414$  &$0.6083$  &$0.5796$  &$0.6502$          \\
             USPS   &$\mathbf{0.6212}$  &$0.5749$   &$0.5557$  &$0.5160$  &$0.5525$  &$0.5338$  &$0.6114$  &$0.3313$          \\

            \hline
            \multicolumn{9}{c}{Part B: Standard deviations}\\
            Data sets & AGUFS  &  URAFS& NDFS  & JELSR & UDFS & RSFS & MCFS & LapScore\\
            \hline
             Lymphoma   &$0.0168$  &$0.0264$   &$0.0303$  &$0.0409$   &$0.0254$  &$0.0439$  &$0.0559$  &$0.0308$          \\
             Lung  &$0.0478$  &$0.0628$   &$0.0455$  &$0.0346$   &$0.0227$ &$0.0446$  &$0.0457$  &$0.0149$            \\
             WapPIE10P  &$0.0391$   &$0.0420$  &$0.0206$ &$0.0177$ &$0.0187$ &$0.0323$  &$0.0378$  &$0.0248$            \\
             JAFFE   &$0.0394$  &$0.0677$   &$0.0297$  &$0.0308$   &$0.0241$ &$0.0274$  &$0.0441$  &$0.0199$           \\
             Umist  &$0.0198$  &$0.0185$   &$0.0166$  &$0.0158$   &$0.0146$ &$0.0197$  &$0.0226$  &$0.0099$            \\
             COIL20  &$0.0175$   &$0.0129$   &$0.0200$   &$0.0119$  &$0.0078$  &$0.0256$  &$0.0176$  &$0.0078$     \\
             Isolet   &$0.0116$  &$0.0315$   &$0.0350$    &$0.0108$   &$0.0100$  &$0.0153$  &$0.0166$  &$0.0056$          \\
             MFEA   &$0.0078$  &$0.0165$   &$0.0075$  &$0.0085$   &$0.0131$  &$0.0284$  &$0.0227$  &$0.0175$          \\
             USPS   &$0.0164$  &$0.0251$   &$0.0011$  &$0.0055$  &$0.0029$  &$0.0084$  &$0.0103$  &$0.0005$          \\
            \hline
        \end{tabular*}
    \end{flushleft}
\end{table*}

\subsection{Parameter sensitivity}
In this subsection, we explore the parameter sensitivity of the proposed method AGUFS. Since the parameter $\beta$ is determined by the Eq. (\ref{22}), we focus on the impact parameters of $\alpha$ and $\lambda$. The clustering metrics ACC and NMI are utilized to evaluate the performance with the parameters  $\alpha$ and $\lambda$ searched in the grid $\left\{10^{-3}, 10^{-2}, 10^{-1}, 1, 10,10^{2}, 10^{3}\right\}$ and the number of features varied in $\left\{20,60,100,140,180\right\}$. Due to the limited space, we give the experimental results on three datasets Umist, COIL20 and JAFFE, as shown in Figs. 3 and 4 in terms of ACC and NMI, respectively. There are the similar results on other data sets. From Figs. 3 and 4, we can observe that ACC and NMI fluctuate a little with the varying $\alpha$ and $\lambda$. Namely, the proposed method AGUFS is not sensitive with regards to the parameters  $\alpha$ and $\lambda$ over a wide range. Then we can easily choose the parameters when using this method in practice.

\subsection{Convergence study}
The proposed Algorithm AGUFS for solving the objective function is iterative. In Section 5.1, we have already proven its convergence. In this subsection, we experimentally investigate the convergence speed of the proposed algorithm.

Fig. 5 shows the convergence curves of the proposed algorithm AGUFS on three data sets Umist, COIL20 and JAFFE. And the experimental results are similar on other data sets. In Fig. 5, the $x$-axis pertains to the iteration number and the $y$-axis pertains to the value of the objective function. As we can see, the convergence speed of the proposed algorithm is very fast, usually within 10 iterations. It further demonstrates the efficiency of the proposed algorithm AGUFS.

\begin{figure}[!htbp]
\centering
\includegraphics[width=\textwidth]{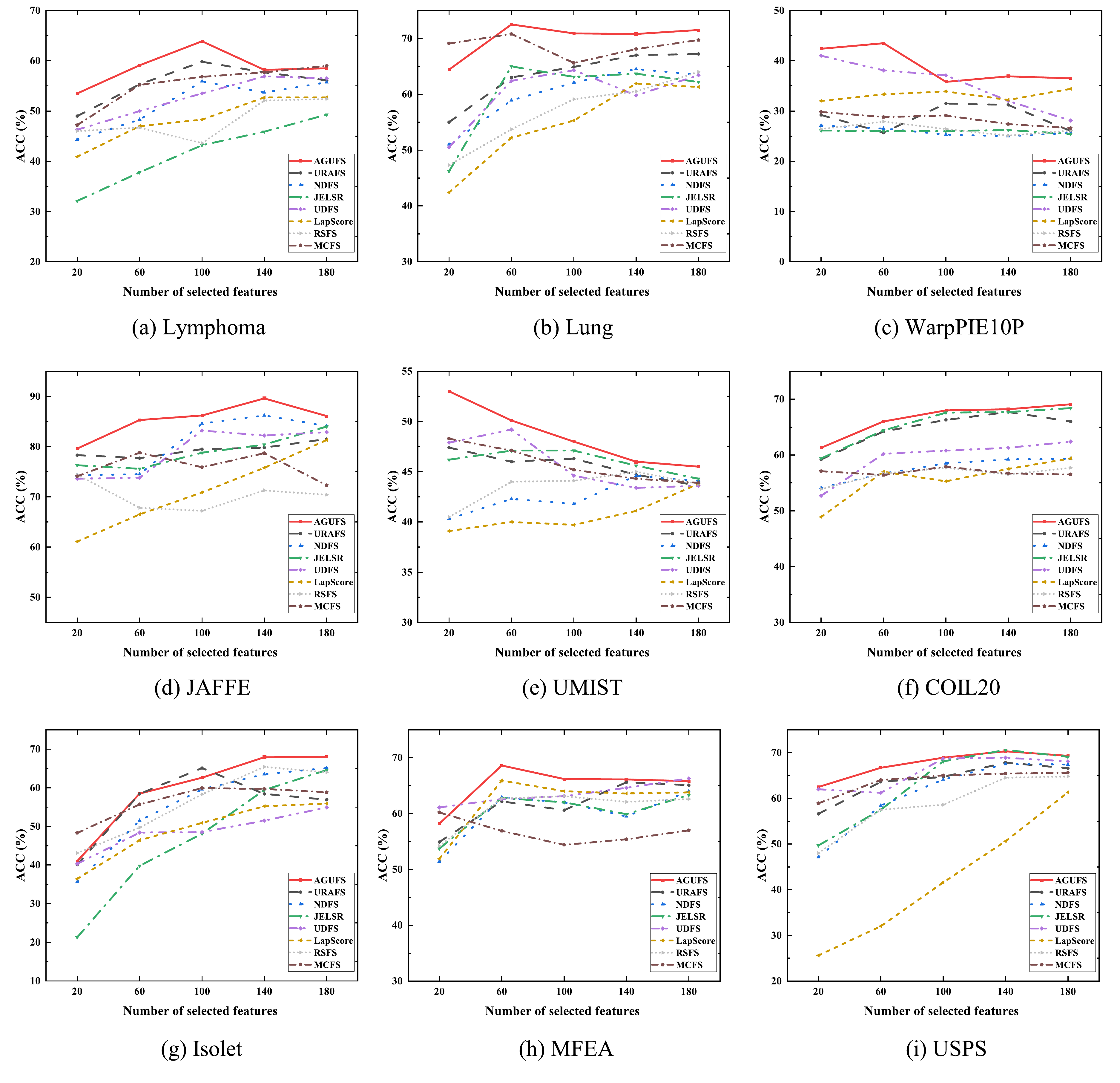}
\caption{ACC results compared with competing methods under different number of selected features}\small \label{FigAddStaDyn}
\end{figure}

\begin{figure}[!htbp]
\centering
\includegraphics[width=\textwidth]{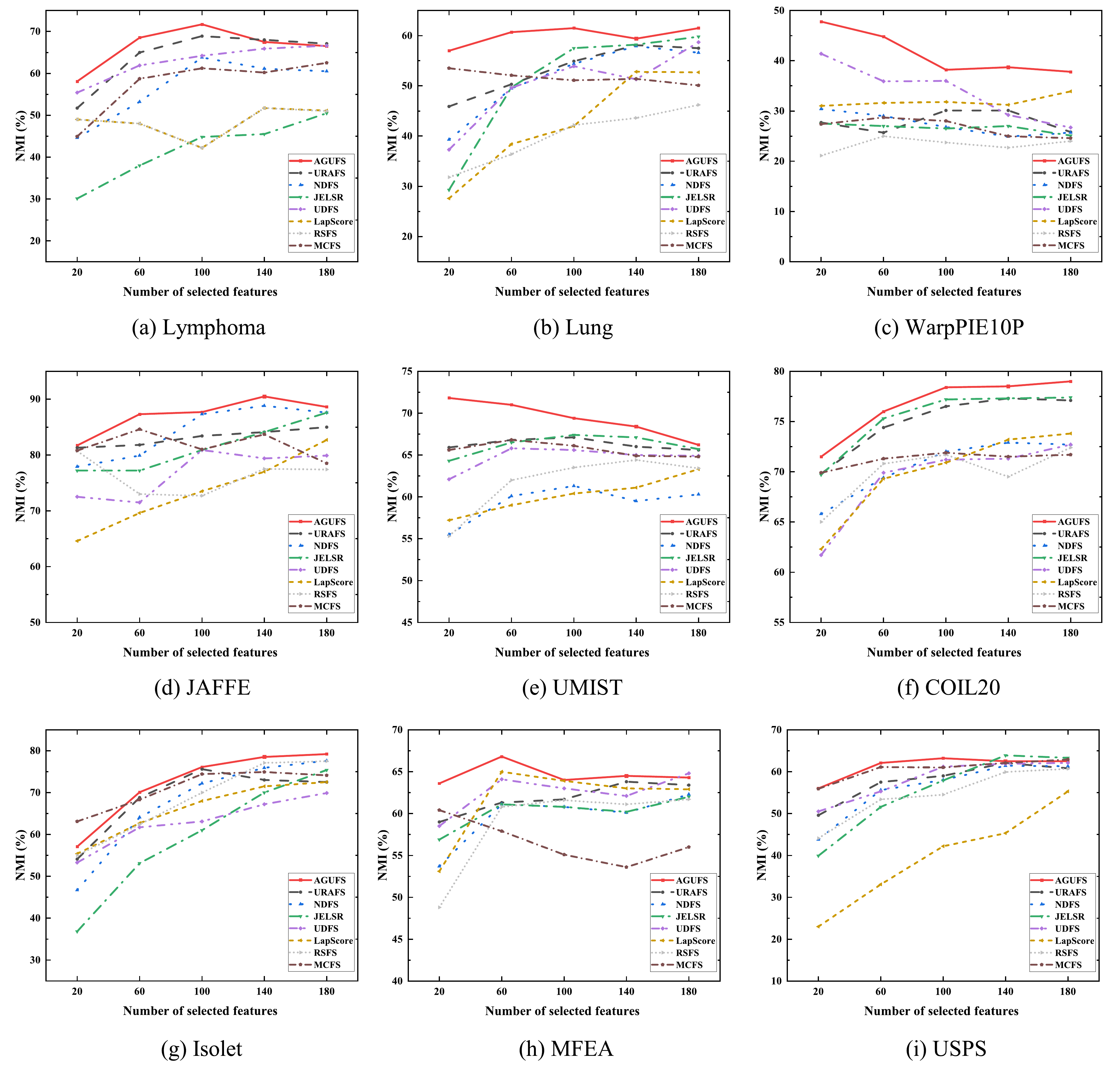}
\caption{NMI results compared with competing methods under different number of selected features}\small \label{FigAddStaDyn}
\end{figure}

\begin{figure}[!htbp]
\centering
\includegraphics[scale=0.44]{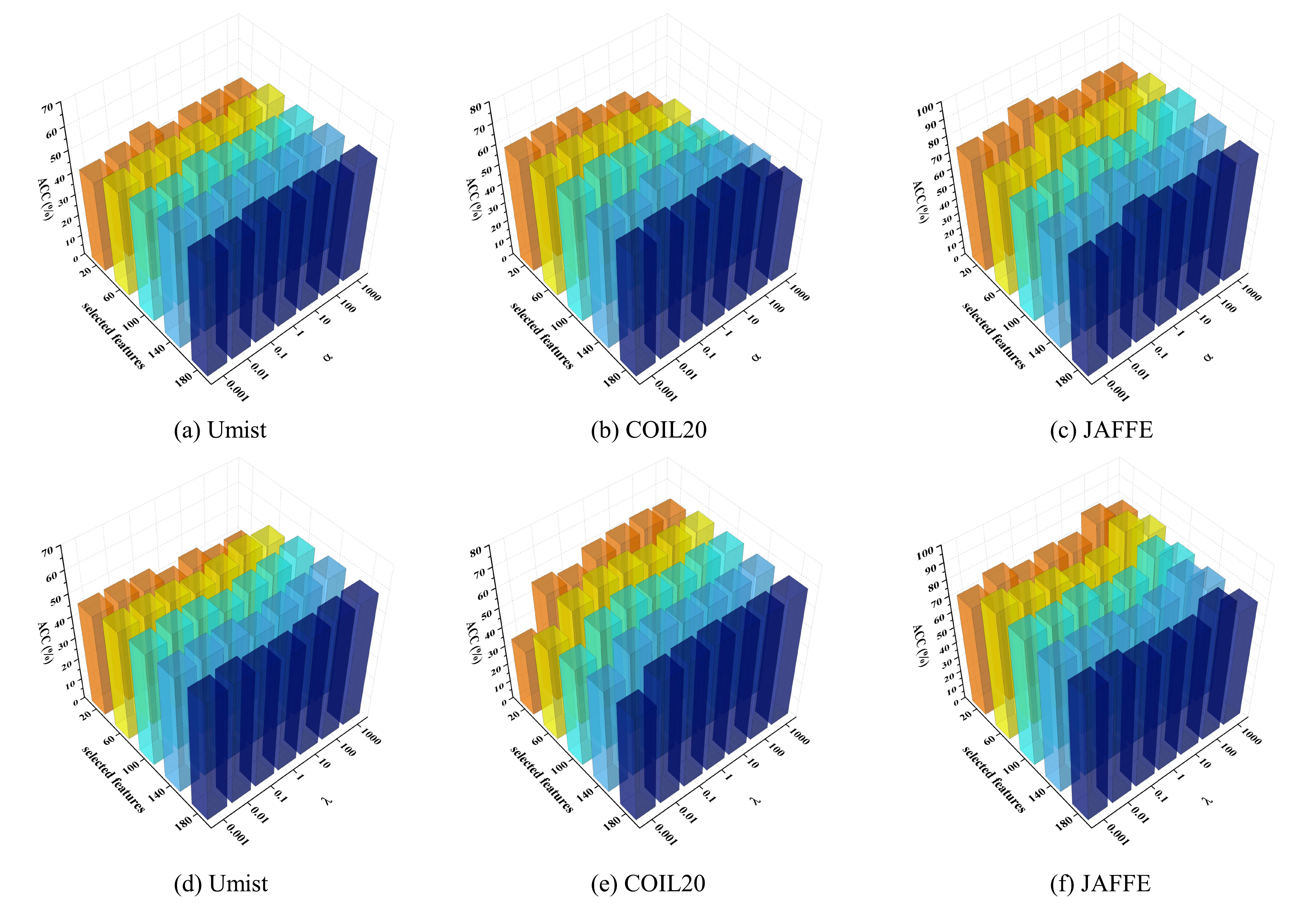}
\caption{ACC with varying  parameters $\alpha$ and $\lambda$}\small \label{FigAddStaDyn}
\end{figure}

\begin{figure}[!htbp]
\centering
\includegraphics[scale=0.44]{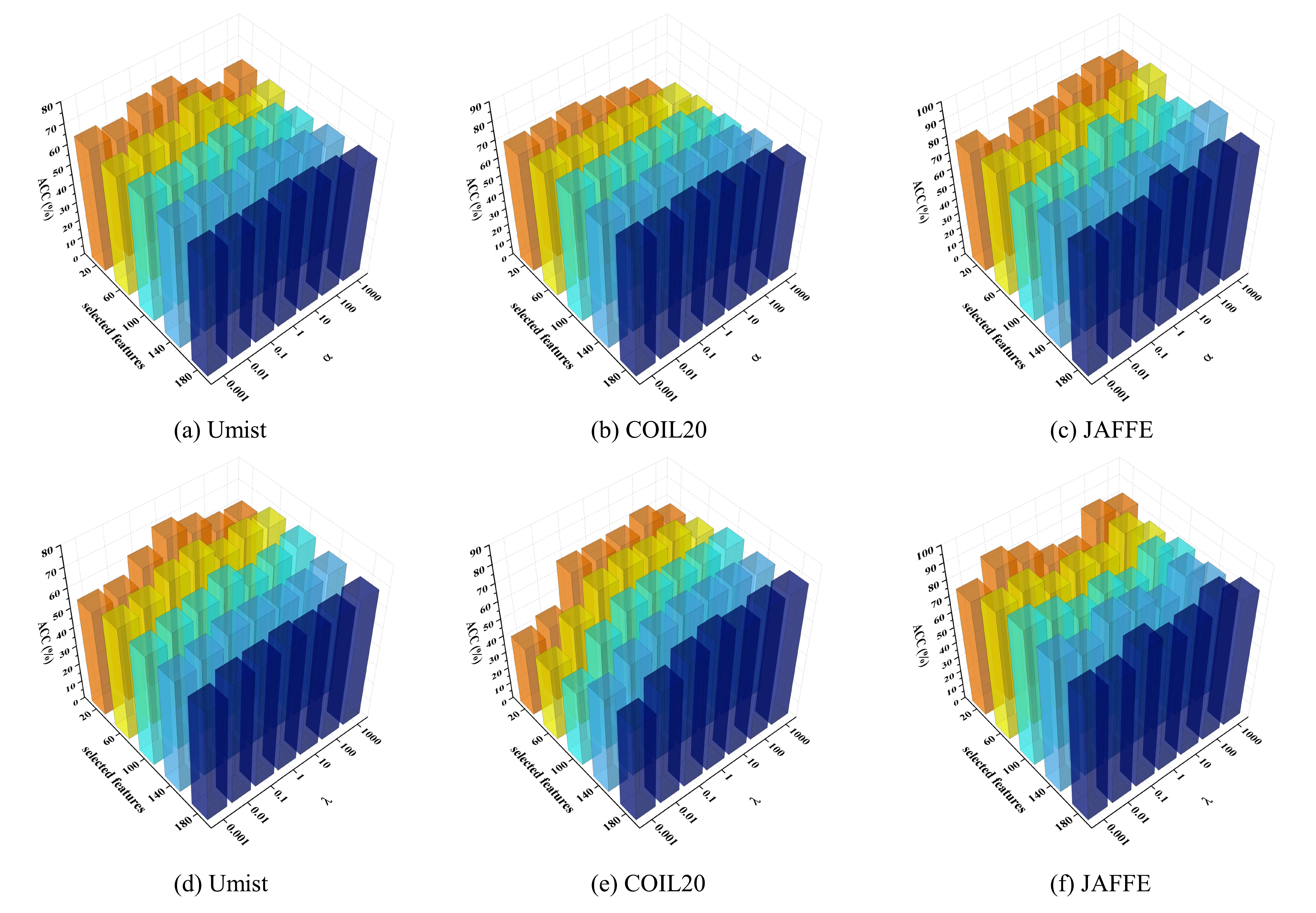}
\caption{NMI with varying  parameters $\alpha$ and $\lambda$}\small \label{FigAddStaDyn}
\end{figure}

\begin{figure}[!htbp]
\centering
\includegraphics[scale=0.44]{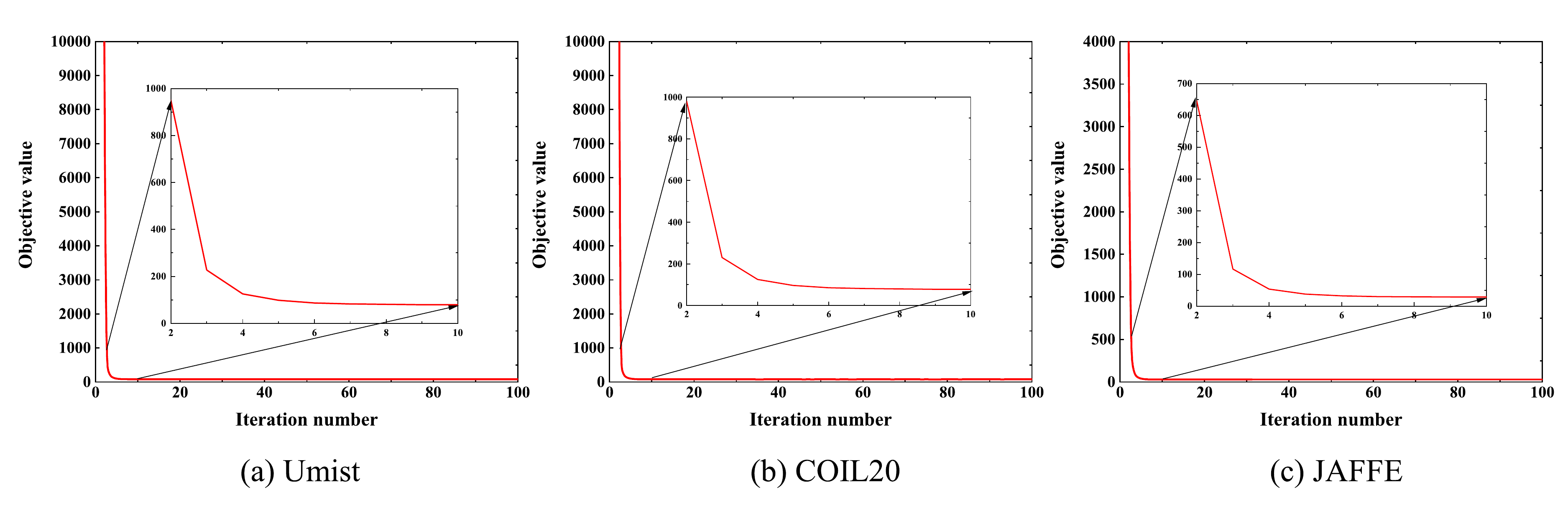}
\caption{Convergence curves of AGUFS on Umist, COIL20 and JAFFE.}\small \label{FigAddStaDyn}
\end{figure}

\section{Conclusion}
In this paper, we presented a generalized regression model with adaptive graph learning for unsupervised feature selection. To select the efficient features, the proposed method introduced a generalized regression model with an uncorrelated constraint, which can choose the discriminative and  uncorrelated features as well as reduce the variance of these data points belonging to the same neighborhood under the graph structure. Meanwhile, the adaptive learning of the similarity-induced graph and the spectral graph method-based learning of indicator matrix in local manifold are integrated into a coherent model  for unsupervised feature selection. An alternative iterative optimization algorithm was presented to solve the objective function.  The corresponding time complexity and convergence of the proposed algorithm were discussed.  Extensive experimental results on nine real-world data sets demonstrated the effectiveness and superiority of the proposed AGUFS method in comparison with other seven baseline approaches. In the future work, we will extend the proposed method to deal with the semi-supervised feature selection task, where integrating  the available label information into the low dimensional embedding structure. Moreover, we will further investigate how to speed up our method while coping with the large-scale data.

\section{Acknowledgements}
This work is supported by the National Science Foundation of China (Nos. 61602327, 61603313), the
Fundamental Research Funds for the Central Universities (No. 220710004005040177) and the Joint
Lab of Data Science and Business Intelligence at Southwestern University of Finance and Economics.

\end{document}